\documentclass{article}

\usepackage{amsmath}
\usepackage[preprint]{neurips_2026}

\usepackage[utf8]{inputenc} 
\usepackage[T1]{fontenc}    
\usepackage{hyperref}       
\usepackage{url}            
\usepackage{booktabs}       
\usepackage{amsfonts}       
\usepackage{amsthm}         
\usepackage{nicefrac}       
\usepackage{microtype}      
\usepackage{xcolor}         
\usepackage{algorithm}
\usepackage{algpseudocode}
\usepackage{multirow}
\usepackage[table]{xcolor}
\usepackage{colortbl}
\usepackage{graphicx}
\usepackage{subcaption}
\usepackage{titletoc} 
\usepackage{bbm}

\usepackage{enumitem}

\usepackage{tcolorbox}
\tcbuselibrary{skins, breakable}

\title{Self-Play Enhancement via Advantage-Weighted Refinement in Online Federated LLM Fine-Tuning with Real-Time Feedback}

\author{%
  Seohyun Lee \\
  Purdue University\\
  \And
  Wenzhi Fang \\
  Purdue University \\
  \And
  Dong-Jun Han \\
  Yonsei University \\
  \AND
  Seyyedali Hosseinalipour \\
  University at Buffalo-SUNY \\
  \And
  Christopher G. Brinton \\
  Purdue University \\
}

\begin{document}

\maketitle
\newcommand{\E}{\mathbb{E}}
\newcommand{\KL}{\mathrm{KL}}
\newcommand{\softmax}{\mathrm{softmax}}
\newcommand{\tok}{\mathrm{tok}}
\newcommand{\concat}{\mathbin{\|}}

\newtheorem{theorem}{Theorem}
\newtheorem{lemma}[theorem]{Lemma}
\newtheorem{corollary}[theorem]{Corollary}
\newtheorem{assumption}{Assumption}

\begin{abstract}
Recent works have advanced feedback-based learning systems, whereby a foundation model is able to intake incoming feedback (e.g., a user) to self-improve, creating a self-loop system of training. However, existing works are limited in needing to consider an offline setup to allow for such feedback-based methods, and are further limited in the need of requiring privileged ground-truth contexts for training. Moreover, there is limited consideration of federated learning (FL), which is particularly well-suited for incorporating external feedback across large networks of end users, for example, but requires methods to be efficient for training on resource-constrained edge devices. Therefore, we introduce SPEAR (Self-Play Enhancement via Advantage-Weighted Refinement), an efficient online learning algorithm for federated LLM fine-tuning. SPEAR utilizes a \textit{feedback-guided self-play loop} to construct naturally contrastive pairs per prompt which are utilized to be trained on (i) standard maximum likelihood on correct completions and (ii) confidence-weighted unlikelihood on tail tokens of incorrect completions. Without the need of expensive group generations and ground-truth contexts for training (i.e., only partial, non-answer feedback), in contrast with existing works, SPEAR can be trained both online and in a resource-efficient manner. We validate SPEAR across various benchmark datasets, demonstrating its superior performance in comparison to state-of-the-art baselines. The implementation code is publicly available \href{https://github.com/lee3296/SPEAR}{here}.
\end{abstract}

\section{Introduction}\label{sec:intro}
Federated Learning (FL) \cite{konevcny2016federated, mcmahan2017communication} has emerged as a promising paradigm for training machine learning (ML) models in a distributed and privacy-preserving manner. In FL, a network of edge devices (clients) collaboratively train a shared model by performing local updates on their private data and transmitting intermediate representations (e.g., model parameters) to a central server for aggregation. This framework eliminates the need to transfer sensitive data while still enabling the model to benefit from diverse, decentralized data sources. Motivated by these advantages, recent work has begun exploring how to adapt FL for fine-tuning foundation models \cite{bommasani2021opportunities}, such as large language models (LLMs) \cite{lee2025tap, ye2024openfedllm, fang2025federated}.

In parallel, centralized LLM fine-tuning paradigms have increasingly focused on feedback-driven learning, where models iteratively improve by incorporating external feedback signals (e.g., human preferences or rewards) \cite{zhao2026self}. In addition, reinforcement learning-based fine-tuning and preference optimization, including approaches such as Group Relative Policy Optimization (GRPO) \cite{shao2024deepseekmath}, have demonstrated strong performance gains by aligning model outputs with desired behaviors. 

\textbf{Challenges:} Despite these advances, there remains limited understanding of how feedback-driven training can be effectively integrated into FL settings. Specifically, existing approaches to feedback-based fine-tuning of LLMs are typically designed for centralized, offline pipelines that rely on privileged information (e.g., ground-truth answers or curated preference datasets) \cite{zhao2026self, song2026expanding}. In contrast, real-world deployments often require online learning from noisy and imperfect feedback sources, such as end users. Furthermore, while methods like GRPO provide an effective framework for preference-based optimization, they introduce significant computational overhead, making them challenging to deploy on resource-constrained edge devices in FL environments. These limitations highlight the need for efficient and online feedback-aware LLM fine-tuning strategies tailored to FL.

\subsection{Contributions}
Driven by these challenges, we propose a novel \textit{computationally efficient and online} learning algorithm suited for federated LLM fine-tuning. Our self-play enhancement via advantage-weighted refinement (SPEAR) algorithm uses a \textit{feedback-guided self-play} loop to construct natural contrasting pairs: correct completions (wins) and incorrect completions (losses). SPEAR executes LLM fine-tuning via (i) a standard maximum-likelihood estimation on the win traces and (ii) a \textit{confidence-gated unlikelihood} to suppress the high confidence tail tokens for incorrect outputs. Overall, our key contributions are outlined as follows:

\begin{itemize}[leftmargin=*, itemsep=2pt, topsep=2pt, parsep=0pt]
    \item \textit{Online, Real-time Feedback-based FL.} We formulate a feedback-guided environment for fine-tuning of LLMs in an FL setting. Without the use of privileged context ground-truth answers within the feedback, we enable an online learning protocol for LLM fine-tuning.  
    \item \textit{Self-Play Enhancement Refinement Algorithm.} We propose a novel feedback-guided self-play enhanced methodology for LLM fine-tuning, which seeks to separate samples into win and lose traces to create contrasting samples for training. Via hyperparameters that control the confidence and number of tail tokens targeted for incorrect loss traces, SPEAR's unlikelihood auxiliary loss discourages tokens that are confidently incorrect. We show theoretically that a model whose SPEAR loss is bounded by $\epsilon$ satisfies a guaranteed per-token separation between win and active lose completions. 
    \item \textit{Experimental Validation.} We conduct extensive experiments on a diverse variety of benchmark datasets, comparing SPEAR with existing state-of-the-art baselines. Via this, we show the superiority of SPEAR in not only accuracy, but also computational efficiency, meaning that even in an online context with no privileged context, SPEAR still maintains superior performance.  
\end{itemize}

\section{Related Work}\label{sec:related_work}
\textbf{Parameter Efficient Fine-tuning (PEFT):} For fine-tuning existing foundation models \cite{bommasani2021opportunities}, a common approach is to adopt PEFT techniques that do not require complete re-training of the model, resulting in computational efficiency and naturally suited for FL scenarios \cite{wang2019adaptive, lee2025tap, kuo2024federated}. Common PEFT methods include prefix-tuning \cite{li2021prefix} and LoRA \cite{hu2022lora} (which adds and updates two low-rank matrices instead of the original weights), with the latter being the most popular option and the approach we adopt.

\textbf{Federated LLM Fine-Tuning:} A crucial component of FL \cite{konevcny2016federated, mcmahan2017communication} is consideration for resource-constrained edge devices and enhancing computational efficiency \cite{sattler2019robust, shahid2021communication, imteaj2021survey}. As a consequence, considerable attention has been given to fine-tuning pre-trained foundation models, particularly LLMs, in a FL setup with PEFT methods \cite{lee2025tap}. For example, \cite{fang2025federated} considers how to train LLMs when their transmitted update sizes differ (dependent on each client's available resources) via a sketching mechanism while \cite{yuan2025local} seeks a method to work with LLMs in a fast and efficient manner in a cloud environment.

\textbf{Feedback-based Learning:} Feedback-guided fine-tuning involves the fine-tuning of pre-trained foundation models via an external feedback source, which is utilized to self-improve the model. In this realm, self-distillation \cite{zhao2026self, shenfeld2026self} has emerged as an option, whereby the model aims to minimize the KL-divergence \cite{kullback1951information, hinton2015distilling} between the feedback-augmented output and original model output via knowledge distillation (KD) \cite{zhao2026self}. Other works have considered utilizing reinforcement learning (RL) \cite{kaelbling1996reinforcement} in conjunction with a distillation process \cite{song2026expanding}. However, prior works such as \cite{zhao2026self} rely on privileged access to ground-truth contexts for feedback. In addition, RL–based approaches often require explicitly defined reward functions and can be computationally inefficient \cite{song2026expanding}, particularly when built upon algorithms such as GRPO due to multiple generations required for group calculation \cite{shao2024deepseekmath}. Compared to prior work, SPEAR is crafted to operate in \textit{online, imperfect feedback settings}. Additionally, SPEAR is capable of \textit{preventing the reinforcement of incorrect feedback-based outputs}, which can lead to model collapse \cite{shumailov2024ai}.

\textbf{Unlikelihood Training:} Unlikelihood training \cite{welleck2019neural}, in contrast with standard supervised fine-tuning (SFT), seeks to minimize the probability that certain tokens are generated by taking the complement of the maximization probability. Unlikelihood training has been shown to effective in discouraging certain outputs over various use cases, such as overuse of frequent words and excessive repetition \cite{li2020don, lagutin2021implicit}. This naturally makes it suitable for self-play feedback based systems, where outputs can be susceptible to incorrect completions and therefore model collapse \cite{shumailov2024ai}. SPEAR, through its confidence-gated, tail-targeted unlikelihood loss, is the first work to consider this dimension.  

\section{Proposed Methodology}

\begin{figure*}[t]
    \centering
    \includegraphics[width=1.0\linewidth]{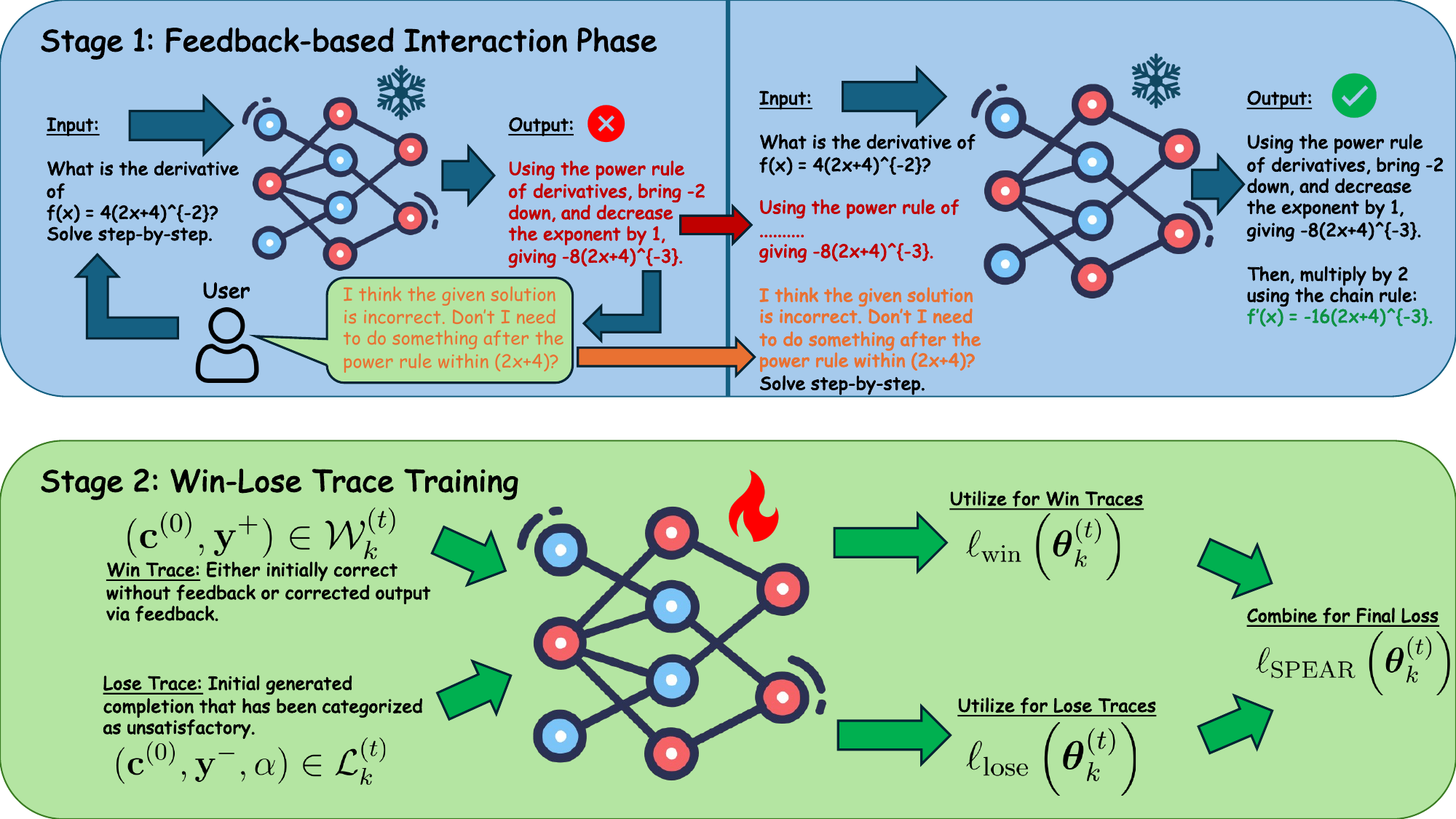}
    \caption{The two phases of the SPEAR algorithm. Firstly, the model interacts with an incoming feedback source (e.g., a user) to correct incorrect generations. After the interaction phase, it categorizes the samples into wins and losses, which are then used to train a standard MLE and unlikelihood objective. This two-stage process repeats at each federated round $t$ for each client selected for aggregation.}
    \label{fig:SPEAR_method}
\end{figure*}

\subsection{Preliminaries and Problem Outline}

We consider an \textit{online, feedback-driven FL} setting, where a model is deployed across $K$ distributed clients indexed by $k \in \{1,2,\ldots,K\} = \mathcal{K}$ and interacts with an incoming feedback source (e.g., a human user) in real time. Unlike standard supervised or preference-based learning settings, we do \textit{not} assume access to ground-truth answers or curated preference datasets. Instead, learning is driven by \textit{partial, non-answer feedback} received as an incoming data stream during interaction.

At each client $k$, a private, but evolving incoming data stream exists, where this incoming data is considered \textit{off-policy} (i.e., not generated by the current model) and follows a distribution $\mathbf{x}_k \sim \mathcal{D}_k$.
For a given prompt $\mathbf{x}_k \sim \mathcal{D}_k$ formed from the incoming data stream, the model first produces an initial completion. Upon observing this completion, an existing (e.g., end user) feedback source provides feedback $\mathbf{f}$, which may contain partial corrections, hints, or intermediate reasoning, but does not reveal the full ground-truth answer (Stage 1 of Fig. \ref{fig:SPEAR_method}). The feedback can be appended to the end of the initial context to form a revised input, enabling iterative refinement. 

This process induces a \textit{trajectory} per prompt consisting of an initial generation and subsequent feedback-conditioned revisions. These trajectories form the fundamental training signal in our setting, rather than static labeled pairs. In particular, each trajectory can be categorized into successful (i.e., corrected or initially correct) and unsuccessful generations, which are later used to construct contrasting training objectives. These generations can be formed from the following process: Given a context $\mathbf{c}$ (a token sequence), the model defines
\[
p_{\boldsymbol{\theta}} (v \mid \mathbf{c}) = \text{softmax}(z_{\boldsymbol{\theta}} (\mathbf{c}))_v, \quad v \in \mathcal{V},
\]
where $z_{\boldsymbol{\theta}}(\mathbf{c}) \in \mathbb{R}^{|\mathcal{V}|}$ are the logits, $\mathcal{V}$ is the vocabulary, and $\boldsymbol{\theta}$ captures the model parameters. Then, for a token sequence $\mathbf{y} = (y_0, \ldots, y_{T-1})$, the likelihood factorizes as:
\begin{equation}\label{eq:tokenseq_prob}
    p_{\boldsymbol{\theta}} (\mathbf{y} \mid \mathbf{c}) = \prod_{i=0}^{T-1} p_{\boldsymbol{\theta}} (y_i \mid \mathbf{c}, \mathbf{y}_{<i}),
\end{equation}
where $\mathbf{y}_{<i} = (y_0, \ldots, y_{i-1})$.

These trajectory-inducing interactions occur locally on each client within a federated system coordinated by a central server over communication rounds $t \in \{0,1,\ldots,S\}$. At round $t$, a subset of clients $\mathcal{K}_t \subseteq \mathcal{K}$ is selected to perform local updates based on their collected interaction trajectories. Each client maintains local model parameters $\boldsymbol{\theta}_k^{(t)}$, initialized from the global model $\boldsymbol{\theta}^{(t)}$. After local training, updated parameters are transmitted to the server and aggregated via Federated Averaging (FedAvg) \cite{konevcny2016federated} based on the number of initially correct or corrected generations:
\[
\boldsymbol{\theta}^{(t+1)} \leftarrow \sum_{k \in \mathcal{K}_t} \frac{|\mathcal{W}_k^{(t)}| \cdot \boldsymbol{\theta}_k^{(t)}}{\sum_{k' \in \mathcal{K}_t} |\mathcal{W}_{k'}^{(t)}|},
\]
where $\mathcal{W}_k^{(t)}$ is called the win set for client $k$ at round $t$ (specifics outlined in Sec. \ref{sec:interaction-phase}). Overall, the goal is to learn model parameters $\boldsymbol{\theta}$ that effectively leverage both successful and unsuccessful interaction trajectories under partial feedback, while remaining computationally efficient for deployment in FL environments. More detailed specifics are outlined in the subsequent sections (Sec. \ref{subsec:SPEAR}).

\subsection{SPEAR: Self-Play Enhancement via Advantage-Weighted Refinement}\label{subsec:SPEAR}
As outlined in Sec. \ref{sec:intro}, since the goal is to fine-tune the model in an online manner efficiently while interacting with an entity such as a user, it is important that feedback $\mathbf{f}$ does not require privileged ground-truths during training. Therefore, SPEAR consists of two phases for each client in each round: (i) an \textit{interaction phase} that generates and categorizes successful completions and (ii) a \textit{training phase} that optimizes for both correct and incorrect traces produced from the model.

\subsubsection{Phase 1: Interaction Phase}\label{sec:interaction-phase}
Firstly, during the interaction phase (Stage 1 of Fig. \ref{fig:SPEAR_method}) at the start of each FL round, for each prompt $\mathbf{x}_{k, j} \in \mathcal{B}_k$, where $\mathcal{B}_k$ denotes a minibatch on client $k$ and $j$ indexes a sample within a batch, client $k$ will encode $\mathbf{x}_{k, j}$ to form context $\mathbf{c}^{(0)}_{k, j}$ and generate $\mathbf{y}_{k,j}^{(0)} \sim p_{\boldsymbol{\theta}_k}(\cdot | \mathbf{c}^{(0)}_{k, j})$. Then, the user interacting with the model can give indication whether the generated answer is satisfactory, assigned to binary function $\mathsf{IsCorrect}(\mathbf{y}_{k, j}^{(0)}, \mathbf{x}_{k, j})$. If the user defines the answer as correct, tuple $(\mathbf{c}^{(0)}_{k, j}, \mathbf{y}_{k, j}^{(0)})$ is added to \textit{win set} $\mathcal{W}_k^{(t)}$. If deemed incorrect, revision is required for those traces. In cases where $\mathbf{y}_{k, j}^{(0)}$ is incorrect, it is assumed that the user may still possess partial knowledge relevant to the task. This can be expressed through informative feedback (e.g., intermediate reasoning steps or the identification of potential errors) even if the user is unable to provide the final solution. Therefore, the model receives feedback $\mathbf{f}_{k, j} \leftarrow \mathsf{FB}(\mathbf{x}_{k,j}, \mathbf{y}_{k, j}^{(0)})$, and constructs revision context 
\begin{equation}\label{eq:initial-gen-feedback}
    \mathbf{c}^{(1)}_{k, j} = \mathbf{c}^{(0)}_{k, j} \concat \mathbf{y}_{k,j}^{(0)} \concat \mathbf{f}_{k,j},
\end{equation}
which is utilized to generate a revised answer $\mathbf{y}_{k,j}^{(1)} \sim p_{\boldsymbol{\theta}_k}(\cdot | \mathbf{c}^{(1)}_{k, j})$. 

After revision is conducted $N$ times (defined by each client/user), the revision traces are re-classified. If $\mathsf{IsCorrect}(\mathbf{y}_{k, j}^{(n)}, \mathbf{x}_{k, j})$ is $1$ (where $n$,  $1\leq n\leq N$, is the number of revisions attempted for a sample so far), then the model has successfully self-corrected given the feedback, and $(\mathbf{c}^{(0)}_{k, j}, \mathbf{y}_{k, j}^{(n)})$ is added to $\mathcal{W}_k^{(t)}$, as we would like the model to learn to produce $\mathbf{y}_{k, j}^{(n)}$ from \textit{original prompt} $\mathbf{c}^{(0)}_{k, j}$. For traces that failed to produce a satisfactory answer, $(\mathbf{c}^{(0)}_{k, j}, \mathbf{y}_{k,j}^{(0)})$ is added to \textit{lose set} $\mathcal{L}_k^{(t)}$. We further categorize two types of lose traces: (i) ones where the initial generated answer is incorrect but feedback generation is successful and (ii) those where both attempts (initial and revision) fail. In the first case, we are more confident in the incorrectness of the trace, as we have received indication definitively via the corrected trace. In the second case, while $\mathbf{y}_{k,j}^{(0)}$ is indicated to be unsatisfactory, we have no concrete alternative. Therefore, we define the final construction of lose set $\mathcal{L}_k^{(t)}$ as 

\begin{equation}
\mathcal{L}_k^{(t)} = \left\{ \big((\mathbf{c}^{(0)}_{k, j}, \mathbf{y}_{k,j}^{(0)}),\, \alpha_{k,j}\big) \;\middle|\;\;
\alpha_{k,j} =
\begin{cases}
1 & \text{if $\mathsf{IsCorrect}(\mathbf{y}_{k, j}^{(n)}, \mathbf{x}_{k, j})$} \\
0.5 & \text{otherwise}
\end{cases}
\right\},
\end{equation}
where $\alpha_{k,j}$ is the confidence weight assigned to each trace in the lose set. In this way, traces that we are more confident about in its incorrectness are assigned a higher weight.

\subsubsection{Phase 2: Training Contrasting Objectives}\label{subsec:final-training-phase}
After the interaction phase, training on the collected traces begins (Stage 2 of Fig. \ref{fig:SPEAR_method}). For a client $k$ at round $t$, SPEAR first seeks to engage in maximum likelihood estimation (MLE) via the standard negative log-likelihood (NLL) \cite{shen2024rethinking} on correct completions from win set $\mathcal{W}_k^{(t)}$, i.e.,
\begin{equation}\label{eq:sft}
    \ell_{\text{win}}\left(\boldsymbol{\theta}_k^{(t)}\right) = - \E_{(\mathbf{c}^{(0)}_{k, j}, \mathbf{y}_{k,j}) \sim \mathcal{W}_k^{(t)}} \left[ \sum_{i=0}^{|\mathbf{y}_{k,j}|-1} \log p_{\boldsymbol{\theta}} (y_{k,j,i} | \mathbf{c}^{(0)}_{k, j}, \mathbf{y}_{k,j,< i})  \right].
\end{equation}
While standard supervised fine-tuning (SFT) on only the win traces is important to ensuring that we do not encourage the model to learn to produce incorrect outputs (and thereby degrade performance), it leaves an entire set of samples generated from lose set $\mathcal{L}_k^{(t)}$ that are not utilized. Therefore, to fully exploit the information available to each client, we perform a contrastive separation between correct and incorrect traces, pushing away tokens associated with the latter, similar to metric learning \cite{chen2020simple, lee2025cooperative}. To accomplish this, for a lose trace $(\mathbf{c}^{(0)}_{k, j}, \mathbf{y}_{k,j}^{-}, \alpha_{k,j}) \in \mathcal{L}_k^{(t)}$, SPEAR seeks to reduce the probability of generating the wrong completion via an \textit{unlikelihood} objective \cite{welleck2019neural, xhonneux2024efficient}. To this end, instead of directly applying the unlikelihood objective to the loss traces, we selectively target tokens using two hyperparameters: a confidence-weighted unlikelihood margin, $\mu \in (0,1)$, and a tail-token selection parameter, $\tau$. The intuition is twofold. First, errors in incorrect completions tend to concentrate toward the end of the sequence, motivating us to focus on the final $\tau$ tokens. Second, we aim to penalize only those tokens for which the model is confidently incorrect. The confidence-weighted margin, $\mu$, operationalizes this by imposing a threshold on token probabilities, such that only high-confidence errors are discouraged, while low-confidence predictions are preserved to avoid over-penalization. With this, we define active token index set on a completion $\mathbf{y}_{k,j}^-$ as
\begin{equation}\label{eq:tail-set}
    \mathcal{T}_\tau(\mathbf{y}_{k,j}^-) \;=\;
\begin{cases}
\bigl\{\,|\mathbf{y}_{k,j}^-| - \min(\tau,|\mathbf{y}_{k,j}^-|),\;\ldots,\;|\mathbf{y}_{k,j}^-|-1\,\bigr\} & \tau > 0 \\[4pt]
\bigl\{0,\;\ldots,\;|\mathbf{y}_{k,j}^-|-1\bigr\} & \tau = 0,
\end{cases}
\end{equation}
with the full completion targeted when $\tau = 0$. $|\mathbf{y}_{k,j}^-|$ denotes the cardinality of the sequence. Then, with $\mathcal{T}_\tau(\mathbf{y}_{k,j}^-)$, the confidence-gated, tail-targeting unlikelihood objective can be described as
\begin{equation}
\label{eq:unlikelihood-loss}
\begin{aligned}
\ell_{\mathrm{lose}}\left(\boldsymbol{\theta}_k^{(t)}\right)
&= -\E_{(\mathbf{c}^{(0)}_{k, j},\, \mathbf{y}_{k,j}^-,\, \alpha_{k,j}) \sim \mathcal{L}_k^{(t)}} \Biggl[
\alpha_{k,j} \sum_{i \in \mathcal{T}_\tau(\mathbf{y}_{k,j}^-)}
\mathbbm{1}\!\left[
p_{\boldsymbol{\theta}}\!\left(y^-_{k,j,i}\mid \mathbf{c}^{(0)}_{k, j}, \mathbf{y}^-_{k,j,<i}\right) > \mu
\right] \\
&\qquad\qquad\qquad\qquad \cdot
\log\!\left(
1 - p_{\boldsymbol{\theta}}\!\left(y^-_{k,j,i}\mid \mathbf{c}^{(0)}_{k, j}, \mathbf{y}^-_{k,j,<i}\right)
\right)
\Biggr].
\end{aligned}
\end{equation}
Therefore, the loss function of SPEAR finalizes to
\begin{equation}\label{eq:spear-loss}
    \ell_{\text{SPEAR}}\left(\boldsymbol{\theta}_k^{(t)}\right) = \lambda_{w} \ell_{\text{win}}\left(\boldsymbol{\theta}_k^{(t)}\right) + \lambda_l \ell_{\text{lose}}\left(\boldsymbol{\theta}_k^{(t)}\right),
\end{equation}
where $\lambda_{w}$ and $\lambda_{l}$ are weighting factors for the SFT and unlikelihood objectives respectively. After training for $E$ local iterations, the parameters are transmitted to the server for aggregation. When aggregating at the server, the aggregation weight given to each client is relative to the number of \textit{win traces} on client $k$ for that iteration $t$, as we would like to prioritize clients that give more correct answers. Moreover, in an online setting, the number of samples continuously evolves, making it inappropriate to consider the number of samples for weighting to be fixed, as is common with traditional FedAvg-based aggregation.
 
\section{Analysis}\label{sec:analysis}
We now provide a theoretical characterization of the SPEAR objective, showing that it implicitly enforces a \emph{log-probability margin} between win and lose completions. This property is specific to the joint structure of SPEAR's NLL and confidence-gated unlikelihood objectives, and yields justification for its hyperparameters. We demonstrate this behavior by analyzing the objective on a single paired sample, which suffices to capture the underlying mechanism. Concretely, we consider a win trace $(\mathbf{c}^{(0)}, \mathbf{y}^+) \in \mathcal{W}$ and a lose trace $(\mathbf{c}^{(0)}, \mathbf{y}^-, \alpha) \in \mathcal{L}$ sharing a common initial context $\mathbf{c}^{(0)}$.

\begin{assumption}[Active Token Confidence]\label{assump:confidence}
All tokens in the active tail set exceed the confidence threshold, i.e., 
$
p_{\boldsymbol{\theta}}(y^-_i \mid \mathbf{c}^{(0)}, \mathbf{y}^-_{<i}) > \mu, \;\;\;\; \forall i \in \mathcal{T}_\tau(\mathbf{y}^-).
$
\end{assumption}

\begin{assumption}[Single Sample]\label{assump:single_sample}
The sets $\mathcal{W}$ and $\mathcal{L}$ each contain exactly one trace, namely $(\mathbf{c}^{(0)}, \mathbf{y}^+)$ and $(\mathbf{c}^{(0)}, \mathbf{y}^-, \alpha)$, respectively.
\end{assumption}

\paragraph{Margin definition.} We define the \emph{SPEAR log-probability margin} for a paired win-lose sample as:
\begin{equation}\label{eq:margin}
M_\tau(\boldsymbol{\theta};\, \mathbf{y}^+, \mathbf{y}^-) \;:=\; \underbrace{\frac{1}{|\mathbf{y}^+|}\log p_{\boldsymbol{\theta}}(\mathbf{y}^+ \mid \mathbf{c}^{(0)})}_{\text{win per-token log-prob}} - \underbrace{\frac{1}{|\mathcal{T}_\tau(\mathbf{y}^-)|}\sum_{i \in \mathcal{T}_\tau(\mathbf{y}^-)} \log p_{\boldsymbol{\theta}}\!\left(y^-_i \mid \mathbf{c}^{(0)}, \mathbf{y}^-_{<i}\right)}_{\text{active-lose per-token log-prob}},
\end{equation}
where, since $p_{\boldsymbol{\theta}} \in (0,1]$, both log-probability terms lie in $(-\infty, 0]$. A larger $M_\tau$ indicates the model assigns higher per-token probability to $\mathbf{y}^+$ and lower probability to the active tail tokens of $\mathbf{y}^-$, i.e., a stronger preference for the win over the lose completion.

\begin{theorem}[SPEAR Log-Probability Margin]\label{thm:margin}
Under Assumptions~\ref{assump:confidence} and \ref{assump:single_sample}, suppose $\ell_{\mathrm{SPEAR}}(\boldsymbol{\theta}) \leq \epsilon$ for the paired win-lose sample above. Then
\begin{equation}\label{eq:margin-bound}
M_\tau(\boldsymbol{\theta};\, \mathbf{y}^+, \mathbf{y}^-) \;\geq\; h(\mu) \;-\; \epsilon\!\left(\frac{1}{\lambda_w\, |\mathbf{y}^+|} \;+\; \frac{1}{\lambda_l\, \alpha\, |\mathcal{T}_\tau(\mathbf{y}^-)|}\right),
\end{equation}
where the \textit{confidence-amplified margin constant} is
\begin{equation}\label{eq:h-mu}
h(\mu) \;=\; \log\!\frac{1}{\max\!\left(\tfrac{1}{4},\; \mu(1-\mu)\right)} \;=\;
\begin{cases}
\log 4 & \mu \leq \tfrac{1}{2}, \\[4pt]
\displaystyle\log\frac{1}{\mu(1-\mu)} & \mu > \tfrac{1}{2}.
\end{cases}
\end{equation}
\end{theorem}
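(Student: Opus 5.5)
The plan is to reduce the bound to two separate per-term inequalities, one from the win NLL and one from the gated unlikelihood. These are then combined through a pointwise inequality that links $-\log p$ and $-\log(1-p)$ for a token whose probability exceeds $\mu$. Throughout I take $\lambda_w,\lambda_l,\alpha>0$.

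\textbf{Step 1: specialize the loss.} Under Assumption~\ref{assump:single_sample}, the expectations in \eqref{eq:sft} and \eqref{eq:unlikelihood-loss} collapse to single terms. Write $p_i := p_{\boldsymbol{\theta}}(y^-_i \mid \mathbf{c}^{(0)}, \mathbf{y}^-_{<i})$. By Assumption~\ref{assump:confidence}, every indicator in \eqref{eq:unlikelihood-loss} equals $1$ on $\mathcal{T}_\tau(\mathbf{y}^-)$. Together with \eqref{eq:tokenseq_prob}, this gives
\[
\ell_{\mathrm{win}} = -\log p_{\boldsymbol{\theta}}(\mathbf{y}^+\mid\mathbf{c}^{(0)}), \qquad \ell_{\mathrm{lose}} = -\alpha\sum_{i\in\mathcal{T}_\tau(\mathbf{y}^-)}\log(1-p_i).
\]
Both quantities are nonnegative, since all probabilities lie in $(0,1]$ and each $p_i<1$ is implicit in the finiteness of the loss. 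Hence $\ell_{\mathrm{SPEAR}}\le\epsilon$ implies that each weighted term is separately bounded: $\lambda_w\ell_{\mathrm{win}}\le\epsilon$ and $\lambda_l\ell_{\mathrm{lose}}\le\epsilon$. This decoupling is the crucial (if simple) observation. Without nonnegativity one could only control the sum.

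\textbf{Step 2: bound the win term.} Dividing $\lambda_w\ell_{\mathrm{win}}\le\epsilon$ by $\lambda_w|\mathbf{y}^+|$ gives
\[
\frac{1}{|\mathbf{y}^+|}\log p_{\boldsymbol{\theta}}(\mathbf{y}^+\mid\mathbf{c}^{(0)}) \ge -\frac{\epsilon}{\lambda_w|\mathbf{y}^+|}.
\]
Similarly, from the lose bound,
\[
\frac{1}{|\mathcal{T}_\tau|}\sum_{i\in\mathcal{T}_\tau}-\log(1-p_i)\le \frac{\epsilon}{\lambda_l\alpha|\mathcal{T}_\tau|}.
\]

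\textbf{Step 3: the pointwise confidence inequality.} This is the step that produces $h(\mu)$. For each active token I will show
\[
-\log p_i - \log(1-p_i) = \log\frac{1}{p_i(1-p_i)} \ge h(\mu).
\]
The argument splits on $\mu$:
\begin{itemize}
\item \emph{Case $\mu\le\tfrac12$.} For any $p$ we have $p(1-p)\le\tfrac14$, which gives the bound $\log 4$.
\item \emph{Case $\mu>\tfrac12$.} Here $p_i>\mu>\tfrac12$. On $(\tfrac12,1)$ the function $p\mapsto p(1-p)$ is decreasing, so $p_i(1-p_i)<\mu(1-\mu)$.
\end{itemize}
In both cases $p_i(1-p_i)\le\max(\tfrac14,\mu(1-\mu))$, which matches \eqref{eq:h-mu}.

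Rearranging gives $-\log p_i\ge h(\mu)+\log(1-p_i)$. Averaging over $\mathcal{T}_\tau(\mathbf{y}^-)$ and inserting the lose bound from Step~2 yields
\[
-\frac{1}{|\mathcal{T}_\tau|}\sum_{i\in\mathcal{T}_\tau}\log p_i \ge h(\mu)-\frac{\epsilon}{\lambda_l\alpha|\mathcal{T}_\tau|}.
\]

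\textbf{Step 4: combine.} Adding the win-term inequality from Step~2 to this last inequality gives exactly \eqref{eq:margin-bound}, by the definition \eqref{eq:margin} of $M_\tau$.

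I expect no real obstacle. The only points needing care are the nonnegativity-based decoupling in Step~1 and the monotonicity case split in Step~3, including noting that $\mu>\tfrac12$ forces $p_i$ onto the decreasing branch of $p(1-p)$.
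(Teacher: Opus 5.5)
Your proposal is correct and follows essentially the same route as the paper. You decouple $\ell_{\mathrm{SPEAR}}\le\epsilon$ into separate win and lose bounds via nonnegativity, then prove the pointwise inequality $\log\frac{1}{p(1-p)}\ge h(\mu)$ for $p>\mu$ with the same case split at $\mu=\tfrac12$ (the paper isolates this as Lemma~\ref{lem:log-bound}), and finally average over the active tail set and add.

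One small remark on your closing summary in Step 3: since $\mu(1-\mu)\le\tfrac14$ always, $\max(\tfrac14,\mu(1-\mu))$ is identically $\tfrac14$, so that summary line only recovers $\log 4$. It is your case-wise bound $p_i(1-p_i)<\mu(1-\mu)$ for $\mu>\tfrac12$ that actually delivers the stronger piecewise form of $h(\mu)$, which is what the paper's proof uses.
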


The derivation is provided in full in Appendix~\ref{app:proofs}. In terms of the derived result, we highlight two implications.

\paragraph{Universal minimum margin.}
For all $\mu \in (0,1)$, $h(\mu) \geq \log 4$, with equality at $\mu = 1/2$. Consequently, as $\epsilon \to 0$, any minimizer of $\ell_\mathrm{SPEAR}$ achieves at least $\log 4$ separation up to the error term between win and active-lose distributions. Crucially, $\log 4$ is the \emph{tight} constant arising from the algebraic coupling between $\ell_\mathrm{win}$ and $\ell_\mathrm{lose}$: the inequality $4p(1-p) \leq 1$ holds for all $p \in [0,1]$ with equality at $p=\tfrac{1}{2}$, reflecting the maximum entropy configuration of a binary token decision.

\paragraph{Coverage--strength tradeoff.}
For $\mu > \tfrac{1}{2}$, $h(\mu) > \log 4$ and grows monotonically to $+\infty$ as $\mu \to 1$, so the guaranteed per-token margin strictly strengthens with $\mu$. However, Theorem~\ref{thm:margin} characterizes only tokens that \emph{exceed} the threshold; tokens below $\mu$ are excluded from penalization entirely. This creates a coverage--strength tradeoff: a larger $\mu$ yields a stronger per-token margin guarantee but a smaller active set $\mathcal{T}_\tau(\mathbf{y}^-)$, reducing the gradient signal available during training. Empirically, values of $\mu$ well below $0.5$ can outperform larger thresholds, consistent with broader coverage dominating in the sparser-signal regime. Theorem~\ref{thm:margin} therefore characterizes one side of this tradeoff and motivates $\mu$ as a hyperparameter to tune rather than prescribing a specific value. The confidence weight $\alpha$ plays an analogous role: a larger $\alpha$ (assigned to lose traces where self-correction succeeded, indicating higher certainty of incorrectness) causes the margin in \eqref{eq:margin-bound} to increase, further justifying SPEAR's distinction between the two lose-trace types. We also note that for a larger tail token selection parameter $\tau$, the RHS of \eqref{eq:margin-bound} increases, meaning a larger margin. While this would seem to indicate targeting the entire incorrect trace would be beneficial, as outlined in Sec. \ref{subsec:final-training-phase}, incorrect reasoning will concentrate towards the end of the completion, meaning targeting the entire completion will result in tokens we want close to be pushed away, resulting in suboptimal performance. We verify this empirically in Appendix \ref{app:exps}.

\section{Experimental Results}\label{sec:experiments}
\paragraph{Experimental Setup.} We apply SPEAR to two widely used open-source LLM families: Qwen2.5 (1.5B) \cite{bai2023qwen, qwen2, qwen2.5} and Llama3.2 (3B) \cite{grattafiori2024llama, touvron2023llama}. These models were selected to capture performance across varying model scales, while remaining representative of the smaller footprints typical in client-side FL settings. For the FL network, 50 total clients is considered, with each client attempting to correct itself with feedback a maximum of two times ($N=2$) for each sample (ablation against $N=1$ found in Appendix \ref{appendix:n=1_revision}). We consider the usage of LoRA \cite{hu2022lora} as our PEFT method, targeting the attention and projection modules at each layer. For all experiments, the AdamW optimizer is utilized \cite{loshchilov2017decoupled}, with linear warmup and cosine decay on the learning rate \cite{alimisis2025we}. All experiments are conducted on a server with a NVIDIA A100-40GB GPU, utilizing the HuggingFace \cite{jain2022hugging} and PyTorch \cite{paszke2019pytorch} libraries for implementation. More detailed specifications can be found in Appendix \ref{appendix:hyperparams}. 

\paragraph{Datasets.} We consider four benchmark datasets encompassing a diverse range of domains: ARC-Challenge \cite{allenai:arc} for science-based question answering, HellaSwag \cite{zellers2019hellaswag} for common-sense reasoning sentence completion, MathMCQA for competition-level mathematics \cite{math_mcqa_2025}, and StrategyQA \cite{geva2021did} for multi-hop reasoning. We measure the accuracy of the final produced answer after reasoning from the testing dataset to evaluate the performance of SPEAR and the baselines. In terms of the incomplete but informative feedback given to the model, for HellaSwag and ARC-Challenge, we include the first $2-3$ words of the correct completion. MathMCQA's feedback includes the first three lines or 300 characters of the step-by-step mathematical reasoning field (whichever is shorter). StrategyQA's feedback encompasses half of the ``facts" column for each sample. Specific text templates utilized can be found in Appendix \ref{appendix:prompt-format}.

\paragraph{Baselines.}
We evaluate SPEAR against four recent state-of-the-art approaches: GRPO \cite{shao2024deepseekmath}, OPSD \cite{zhao2026self}, RLTF-SD \cite{song2026expanding}, and a feedback-based supervised fine-tuning variant without unlikelihood training (Feedback SFT). The RLTF-SD implementation uses a batch rollout strategy (collecting all rollouts upfront before multiple gradient updates) for time-efficiency. These baselines were selected to span the dominant paradigms for improving reasoning in foundation models: RL with explicit reward design (GRPO), self-distillation via structured feedback (OPSD), a mixture of both RL and self-distillation (RLTF-SD), and feedback-driven supervised fine-tuning without unlikelihood loss considerations (Feedback SFT). Importantly, neither GRPO, RLTF-SD, or OPSD operates in the setting we consider: GRPO and RLTF-SD relies on explicitly specified reward functions, while OPSD assumes access to privileged or high-quality ground-truth feedback during distillation. In contrast, SPEAR targets \textit{online learning from imperfect feedback}, where neither reliable rewards nor ground-truth annotations are available. To the best of our knowledge, prior work has not studied this combination of constraints, and therefore no existing baselines for direct comparison. As a result, we compare against the closest established methods.

\paragraph{Additional Experimental Results.} Additional experimental results not included in the main text (e.g., additional ablation studies, additional clients, etc.) can be found in Appendix \ref{app:exps}.

\subsection{Results}

\begin{table*}[t]
\centering
\small
\setlength{\tabcolsep}{4pt} 
\renewcommand{\arraystretch}{1.15}
\caption{Performance of SPEAR vs. baselines on Llama3.2-3B and Qwen2.5-1.5B across 3 random seeds, showing that SPEAR outperforms the baselines across multiple models and datasets.}
\label{tab:main_results}
\begin{tabular}{llccccc}
\toprule
\textbf{Model} & \textbf{Algorithm} & \textbf{ARC-Challenge} & \textbf{HellaSwag} & \textbf{MathMCQA} & \textbf{StrategyQA} & \textbf{Avg.} \\
\midrule

\multirow{4}{*}{\textbf{Llama3.2-3B}}
& Feedback SFT & 67.95$\pm$1.45 & 84.60$\pm$0.98 & 48.60$\pm$9.89 & 49.49$\pm$1.76 & 62.66 \\
& GRPO \cite{shao2024deepseekmath}        & 65.70$\pm$3.48 & 75.99$\pm$1.53 & 40.20$\pm$14.89 & 49.34$\pm$1.51 & 57.81 \\
& OPSD \cite{zhao2026self}       & 24.69$\pm$1.55 & 25.03$\pm$0.64 & 0.00$\pm$0.00 & 0.00$\pm$0.00 & 12.43 \\
& RLTF-SD \cite{song2026expanding} & 63.28$\pm$3.58 & 60.05$\pm$12.30 & 44.70$\pm$7.92 & 51.58$\pm$0.08 & 54.90 \\
\rowcolor{blue!10}
& \textbf{SPEAR (Ours)} & 69.17$\pm$0.85 & 86.72$\pm$0.67 & 56.70$\pm$0.78 & 65.84$\pm$0.78 & \textbf{69.61} \\
\midrule

\multirow{4}{*}{\textbf{Qwen2.5-1.5B}}
& Feedback SFT & 73.89$\pm$3.38 & 76.14$\pm$3.61 & 54.93$\pm$2.04 & 50.51$\pm$1.76 & 63.87 \\
& GRPO \cite{shao2024deepseekmath}       & 76.30$\pm$0.17 & 79.00$\pm$1.65 & 50.87$\pm$3.86 & 51.38$\pm$6.44 & 64.39 \\
& OPSD \cite{zhao2026self}       & 25.03$\pm$1.23 & 24.92$\pm$0.69 & 0.00$\pm$0.00 & 0.00$\pm$0.00 & 12.49 \\
& RLTF-SD \cite{song2026expanding} & 75.03$\pm$0.40 & 60.65$\pm$20.78 & 44.70$\pm$7.41 & 54.29$\pm$10.08 & 58.67 \\
\rowcolor{blue!10}
& \textbf{SPEAR (Ours)} & 76.93$\pm$1.41 & 80.65$\pm$0.79 & 59.07$\pm$1.16 & 66.08$\pm$3.20 & \textbf{70.68} \\
\bottomrule
\end{tabular}
\end{table*}

\begin{figure}[htbp]
\centering

\begin{subfigure}[htbp]{0.245\textwidth}
    \centering
    \includegraphics[width=\linewidth]{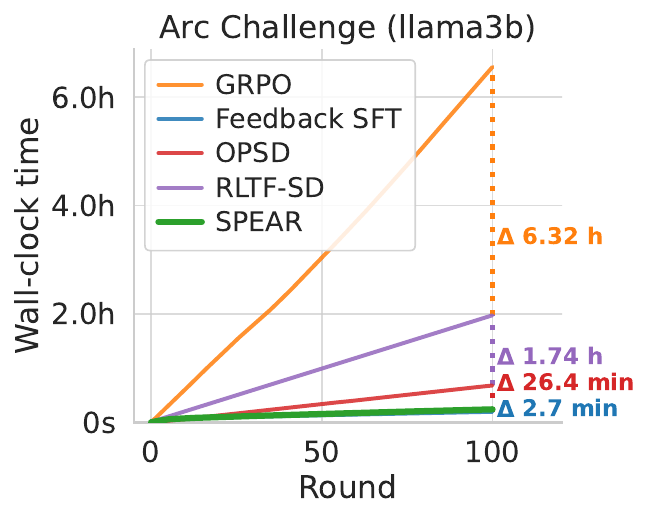}
    \caption{ARC-Challenge}
\end{subfigure}
\begin{subfigure}[htbp]{0.245\textwidth}
    \centering
    \includegraphics[width=\linewidth]{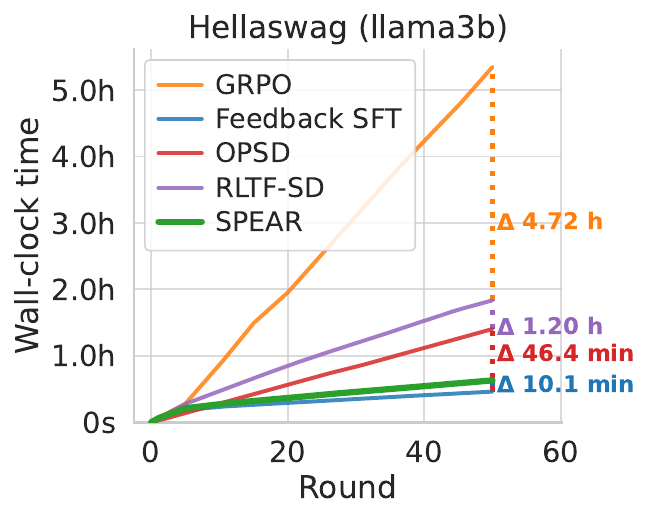}
    \caption{HellaSwag}
\end{subfigure}
\begin{subfigure}[htbp]{0.245\textwidth}
    \centering
    \includegraphics[width=\linewidth]{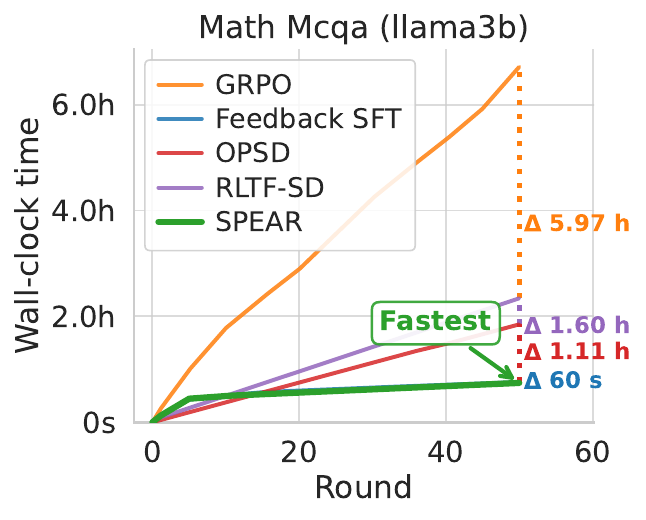}
    \caption{MathMCQA}
\end{subfigure}
\begin{subfigure}[htbp]{0.245\textwidth}
    \centering
    \includegraphics[width=\linewidth]{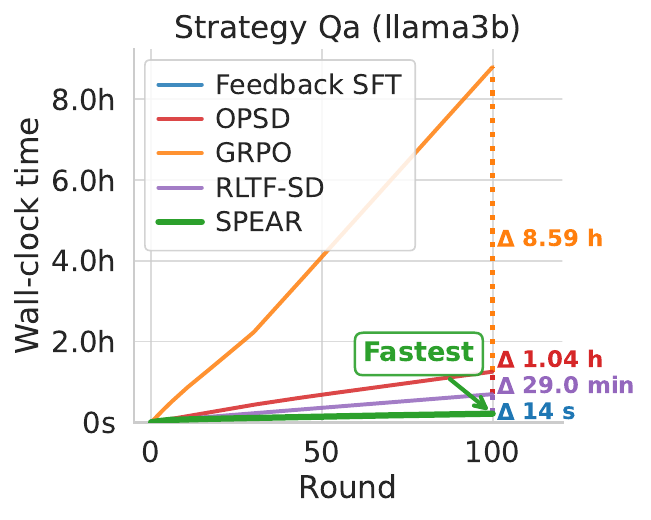}
    \caption{StrategyQA}
\end{subfigure}

\vspace{0.75em}

{\small \textbf{(a) Llama3.2-3B}}

\vspace{0.75em}

\begin{subfigure}[htbp]{0.245\textwidth}
    \centering
    \includegraphics[width=\linewidth]{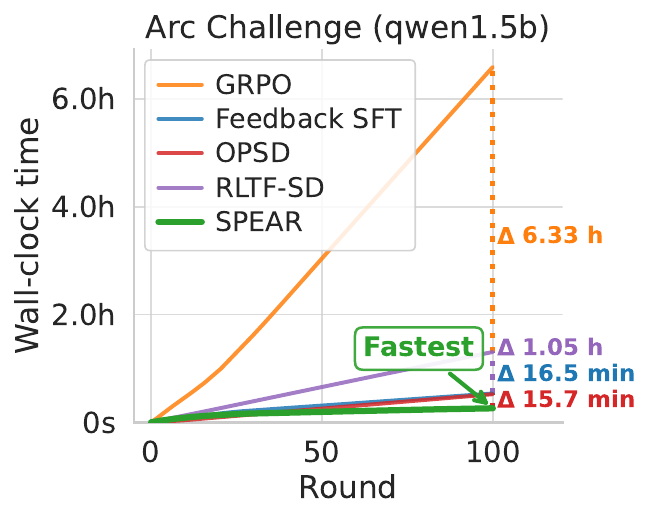}
    \caption{ARC-Challenge}
\end{subfigure}
\begin{subfigure}[htbp]{0.245\textwidth}
    \centering
    \includegraphics[width=\linewidth]{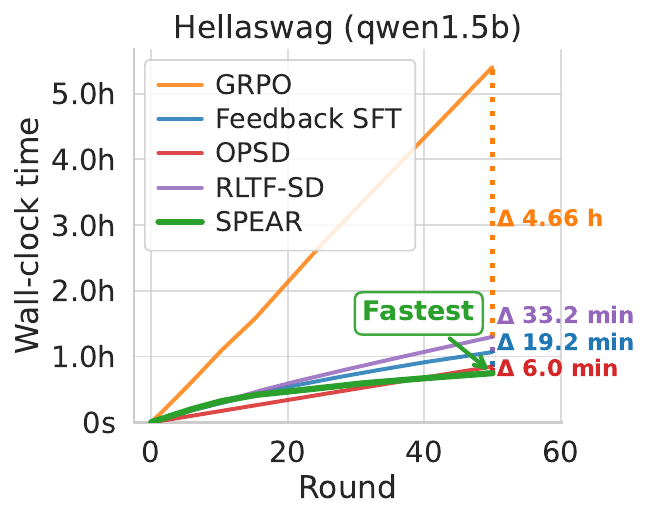}
    \caption{HellaSwag}
\end{subfigure}
\begin{subfigure}[htbp]{0.245\textwidth}
    \centering
    \includegraphics[width=\linewidth]{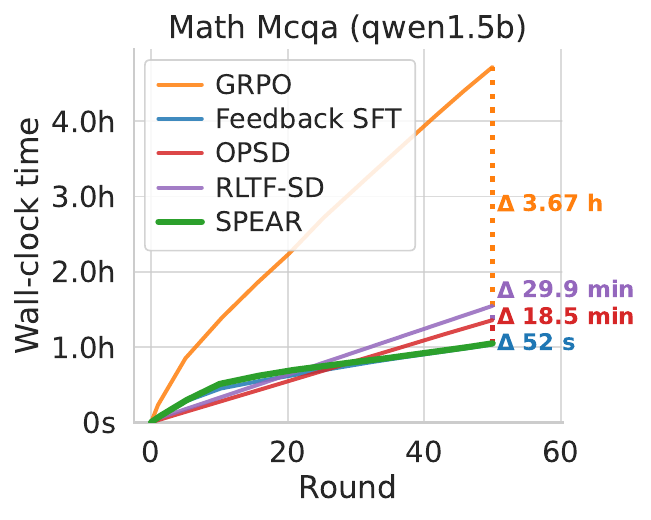}
    \caption{MathMCQA}
\end{subfigure}
\begin{subfigure}[htbp]{0.245\textwidth}
    \centering
    \includegraphics[width=\linewidth]{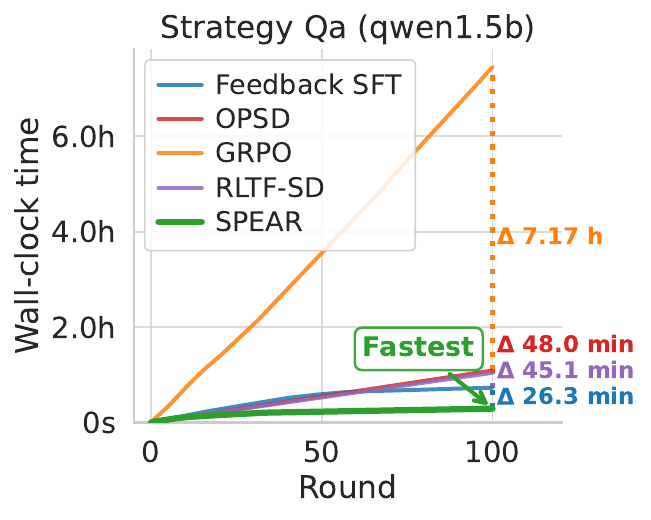}
    \caption{StrategyQA}
\end{subfigure}

\vspace{0.5em}

{\small \textbf{(b) Qwen2.5-1.5B}}

\vspace{0.5em}

\caption{Cumulative training time of each algorithm on (a) Llama-3B and (b) Qwen-1.5B models. $\Delta$ delineates the final wall-clock time difference between the corresponding algorithm and SPEAR. When SPEAR achieves the fastest final performance, it's marked with a ``Fastest" box.}
\label{fig:training_curves}

\end{figure}

\paragraph{Performance in Comparison to Baselines.} Considering the results in Table \ref{tab:main_results}, we note the superiority of SPEAR in comparison to all baselines across all evaluated datasets. In particular, we note that the consistent high performance of SPEAR across all scenarios while some algorithms struggle (e.g., GRPO and RLTF-SD achieves only $40\%$ and $45\%$ on Llama for MathMCQA). We can also see engaging in pure feedback SFT on only the correct traces (Feedback SFT) results in performance lags behind SPEAR, giving credence to the use of our unlikelihood based loss. Another important observation is the collapse of self-distillation based algorithm OPSD on all datasets in comparison to other baselines. We attribute this phenomenon to the reliance of self-distillation techniques on high-quality feedback-generated outputs. Especially in the case of smaller models typically used in edge settings, the initial feedback-generated responses may be suboptimal and/or outputting an invalid format, leading the KL objective to guide the model toward an incorrect distribution. We especially believe this to be the case due to the fact that the performance of OPSD on MathMCQA and StrategyQA achieves specifically $0.0$, which is below random chance, meaning that OPSD is learning to matching the KL divergence, but is aligning it with suboptimal or incorrect completions generated from the feedback due to the lack of checking the correctness of the feedback produced answer. Lastly, we also observe that the margin of improvement with SPEAR in comparison with the baselines is greater on tasks that require longer chain-of-thought (CoT) reasoning, as observed with MathMCQA and StrategyQA. 

\paragraph{Wall-clock Time Comparison.} In Fig. \ref{fig:training_curves}, we compare the wall-clock training time of SPEAR and the baselines over the course of federated training. For GRPO and RLTF-SD, due to memory constraints, the number of gradient accumulation steps is increased ($\times 2$(Llama)/$\times 4$(Qwen) for ARC and HellaSwag, $\times 2$ for MathMCQA, and $\times 4$ for StrategyQA) and group size is kept low (specifics found in Appendix \ref{appendix:hyperparams}). We can see across both Llama3.2-3B and Qwen2.5-1.5B, SPEAR remains the fastest algorithm across a majority of cases, with the difference between Feedback SFT and SPEAR being small when the former is quicker than SPEAR. We find that SPEAR outperforms Feedback SFT in most scenarios, despite the added cost of computing the confidence-gated tailed unlikelihood loss. This advantage arises because SPEAR achieves higher accuracy, reducing the number of self-play feedback loops required and thereby improving overall time efficiency. This is also the reason why SPEAR's cumulative time is not always linear; with less corrections needed as the model improves, the total number of expensive generations decreases. GRPO and RLTF-SD suffer from substantial memory overhead due to their group-based computation, which necessitates additional gradient accumulation steps and smaller group sizes, which is particularly problematic in resource-constrained federated settings. Combined with extra generation steps, this leads to considerably slower training than SPEAR. While RLTF-SD does not suffer as much as GRPO due to early termination for already correct completions, its reliance on GRPO-based group generation still adds significant overhead in comparison to SPEAR.

\begin{table*}[htbp]
\centering
\setlength{\tabcolsep}{7pt}
\renewcommand{\arraystretch}{1.15}
\caption{Ablation study on $\mu$ for SPEAR across 3 random seeds for Llama3.2-3B and Qwen2.5-1.5B. SPEAR's accuracy remains stable to differing $\mu$ hyperparameter values, indicating robustness.}
\label{tab:ablation_mu}
\begin{tabular}{llcccc}
\toprule
\textbf{Model} & \textbf{$\mu$} & \textbf{ARC-Challenge} & \textbf{HellaSwag} & \textbf{MathMCQA} & \textbf{StrategyQA} \\
\midrule

\multirow{4}{*}{\textbf{Llama3.2-3B}}
& 0.1 & 67.92$\pm$1.79 & \textbf{87.09$\pm$}0.85 & \textbf{56.27$\pm$}2.36 & 64.68$\pm$3.61 \\
& 0.5 & \textbf{69.25$\pm$}1.25 & 86.10$\pm$0.58 & 55.70$\pm$2.56 & 67.20$\pm$0.69 \\
& 0.7 & \textbf{69.25$\pm$}1.37 & 87.00$\pm$0.28 & 54.60$\pm$1.30 & \textbf{68.51$\pm$}2.59 \\
& 0.9 & 69.00$\pm$1.00 & 85.95$\pm$0.93 & 52.50$\pm$1.93 & 66.23$\pm$2.70 \\
\midrule

\multirow{4}{*}{\textbf{Qwen2.5-1.5B}}
& 0.1 & \textbf{77.25$\pm$}1.14 & \textbf{79.69$\pm$}3.34 & \textbf{58.80$\pm$}1.21 & \textbf{66.86$\pm$}1.03 \\
& 0.5 & 63.23$\pm$21.81 & 78.81$\pm$2.26 & 56.77$\pm$1.53 & 66.62$\pm$1.25 \\
& 0.7 & 77.08$\pm$0.49 & 78.27$\pm$1.80 & 57.80$\pm$0.72 & 64.05$\pm$3.28 \\
& 0.9 & 75.09$\pm$0.98 & 79.41$\pm$1.53 & 57.27$\pm$1.34 & 66.62$\pm$1.69 \\
\bottomrule
\end{tabular}
\end{table*}

\paragraph{Ablation over $\mu$.} We study how the $\mu$ hyperparameter influences the training process of SPEAR. Based on the results in Table \ref{tab:ablation_mu}, we can note that in general, setting $\mu$ to a lower threshold more greatly benefits the training. This could be due to the fact that with a lower threshold, more tokens are targeted in the unlikelihood loss, leading to a greater gradient signal (as outlined in Sec. \ref{sec:analysis}). However, we can see that depending on the setting, the optimal setting differs (e.g., $0.5$/$0.7$ works best for ARC on Llama or $0.7$ for StrategyQA on Llama). This indicates that $\mu$ is a hyperparameter to be tuned rather than merely setting to $> 0.5$ or the lowest value possible. An important point, however, \textit{is that in general, the performance of SPEAR still remains high across differing settings} (with the exception of $0.5$ for ARC on Qwen2.5-1.5B), indicating that the methodology is \textit{not overly sensitive to hyperparameter tuning with respect to $\mu$.} For example, the difference between $\mu=0.1$ and $\mu=0.9$ on StrategyQA on Qwen2.5-1.5B differs in accuracy by less than $0.5\%$. 

\section{Conclusion and Limitations}\label{sec:conclusion}
In this paper, we introduced SPEAR, a novel \textit{online, self-play federated fine-tuning} algorithm that works with \textit{imperfect, but informative feedback sources}. In SPEAR, by deliberately separating generated completions into win and lose traces, we allow the model to maximize information available to it and discourage undesirable token generation through a confidence-gated, tail targeted unlikelihood objective. Through extensive experiments against state-of-the-art baselines across a diverse set of datasets, we demonstrated the effectiveness and superiority of SPEAR in both training efficiency and performance. Despite the benefits demonstrated by SPEAR, a potential limitation and future avenue of our work is the need for the feedback to be informative: for example, if the user gives feedback that is contrary or irrelevant to the task at hand, it can lead to a scenario where the majority of completions are incorrect, leading to no correct traces to sample from.

\bibliographystyle{plain}
\bibliography{references}

@article{konevcny2016federated,
  title={Federated learning: Strategies for improving communication efficiency},
  author={Kone{\v{c}}n{\`y}, Jakub and McMahan, H Brendan and Yu, Felix X and Richt{\'a}rik, Peter and Suresh, Ananda Theertha and Bacon, Dave},
  journal={arXiv preprint arXiv:1610.05492},
  year={2016}
}

@article{shen2024rethinking,
  title={Rethinking data selection for supervised fine-tuning},
  author={Shen, Ming},
  journal={arXiv preprint arXiv:2402.06094},
  year={2024}
}

@inproceedings{chen2020simple,
  title={A simple framework for contrastive learning of visual representations},
  author={Chen, Ting and Kornblith, Simon and Norouzi, Mohammad and Hinton, Geoffrey},
  booktitle={International conference on machine learning},
  pages={1597--1607},
  year={2020},
  organization={PmLR}
}

@article{lee2025cooperative,
  title={Cooperative decentralized backdoor attacks on vertical federated learning},
  author={Lee, Seohyun and Fang, Wenzhi and Das, Anindya Bijoy and Hosseinalipour, Seyyedali and Love, David J and Brinton, Christopher G},
  journal={IEEE Transactions on Networking},
  volume={34},
  pages={2004--2019},
  year={2025},
  publisher={IEEE}
}

@article{welleck2019neural,
  title={Neural text generation with unlikelihood training},
  author={Welleck, Sean and Kulikov, Ilia and Roller, Stephen and Dinan, Emily and Cho, Kyunghyun and Weston, Jason},
  journal={arXiv preprint arXiv:1908.04319},
  year={2019}
}

@article{wang2019adaptive,
  title={Adaptive federated learning in resource constrained edge computing systems},
  author={Wang, Shiqiang and Tuor, Tiffany and Salonidis, Theodoros and Leung, Kin K and Makaya, Christian and He, Ting and Chan, Kevin},
  journal={IEEE journal on selected areas in communications},
  volume={37},
  number={6},
  pages={1205--1221},
  year={2019},
  publisher={IEEE}
}

@article{lee2025tap,
  title={TAP: Two-Stage Adaptive Personalization of Multi-task and Multi-Modal Foundation Models in Federated Learning},
  author={Lee, Seohyun and Fang, Wenzhi and Han, Dong-Jun and Hosseinalipour, Seyyedali and Brinton, Christopher G},
  journal={arXiv preprint arXiv:2509.26524},
  year={2025}
}

@inproceedings{li2021prefix,
  title={Prefix-tuning: Optimizing continuous prompts for generation},
  author={Li, Xiang Lisa and Liang, Percy},
  booktitle={Proceedings of the 59th Annual Meeting of the Association for Computational Linguistics and the 11th International Joint Conference on Natural Language Processing (Volume 1: Long Papers)},
  pages={4582--4597},
  year={2021}
}

@article{hu2022lora,
  title={Lora: Low-rank adaptation of large language models.},
  author={Hu, Edward J and Shen, Yelong and Wallis, Phillip and Allen-Zhu, Zeyuan and Li, Yuanzhi and Wang, Shean and Wang, Liang and Chen, Weizhu and others},
  journal={Iclr},
  volume={1},
  number={2},
  pages={3},
  year={2022}
}

@article{bommasani2021opportunities,
  title={On the opportunities and risks of foundation models},
  author={Bommasani, Rishi and Hudson, Drew A and Adeli, Ehsan and Altman, Russ and Arora, Simran and von Arx, Sydney and Bernstein, Michael S and Bohg, Jeannette and Bosselut, Antoine and Brunskill, Emma and others},
  journal={arXiv preprint arXiv:2108.07258},
  year={2021}
}

@inproceedings{ye2024openfedllm,
  title={Openfedllm: Training large language models on decentralized private data via federated learning},
  author={Ye, Rui and Wang, Wenhao and Chai, Jingyi and Li, Dihan and Li, Zexi and Xu, Yinda and Du, Yaxin and Wang, Yanfeng and Chen, Siheng},
  booktitle={Proceedings of the 30th ACM SIGKDD conference on knowledge discovery and data mining},
  pages={6137--6147},
  year={2024}
}

@article{fang2025federated,
  title={Federated sketching lora: On-device collaborative fine-tuning of large language models},
  author={Fang, Wenzhi and Han, Dong-Jun and Yuan, Liangqi and Hosseinalipour, Seyyedali and Brinton, Christopher G},
  journal={arXiv preprint arXiv:2501.19389},
  year={2025}
}

@article{zhao2026self,
  title={Self-Distilled Reasoner: On-Policy Self-Distillation for Large Language Models},
  author={Zhao, Siyan and Xie, Zhihui and Liu, Mengchen and Huang, Jing and Pang, Guan and Chen, Feiyu and Grover, Aditya},
  journal={arXiv preprint arXiv:2601.18734},
  year={2026}
}

@article{shao2024deepseekmath,
  title={Deepseekmath: Pushing the limits of mathematical reasoning in open language models},
  author={Shao, Zhihong and Wang, Peiyi and Zhu, Qihao and Xu, Runxin and Song, Junxiao and Bi, Xiao and Zhang, Haowei and Zhang, Mingchuan and Li, YK and Wu, Yang and others},
  journal={arXiv preprint arXiv:2402.03300},
  year={2024}
}

@article{song2026expanding,
  title={Expanding the Capabilities of Reinforcement Learning via Text Feedback},
  author={Song, Yuda and Chen, Lili and Tajwar, Fahim and Munos, Remi and Pathak, Deepak and Bagnell, J Andrew and Singh, Aarti and Zanette, Andrea},
  journal={arXiv preprint arXiv:2602.02482},
  year={2026}
}

@article{bai2023qwen,
  title={Qwen technical report},
  author={Bai, Jinze and Bai, Shuai and Chu, Yunfei and Cui, Zeyu and Dang, Kai and Deng, Xiaodong and Fan, Yang and Ge, Wenbin and Han, Yu and Huang, Fei and others},
  journal={arXiv preprint arXiv:2309.16609},
  year={2023}
}

@article{grattafiori2024llama,
  title={The llama 3 herd of models},
  author={Grattafiori, Aaron and Dubey, Abhimanyu and Jauhri, Abhinav and Pandey, Abhinav and Kadian, Abhishek and Al-Dahle, Ahmad and Letman, Aiesha and Mathur, Akhil and Schelten, Alan and Vaughan, Alex and others},
  journal={arXiv preprint arXiv:2407.21783},
  year={2024}
}

@article{loshchilov2017decoupled,
  title={Decoupled weight decay regularization},
  author={Loshchilov, Ilya and Hutter, Frank},
  journal={arXiv preprint arXiv:1711.05101},
  year={2017}
}

@article{allenai:arc,
      author    = {Peter Clark  and Isaac Cowhey and Oren Etzioni and Tushar Khot and
                    Ashish Sabharwal and Carissa Schoenick and Oyvind Tafjord},
      title     = {Think you have Solved Question Answering? Try ARC, the AI2 Reasoning Challenge},
      journal   = {arXiv:1803.05457v1},
      year      = {2018},
}

@inproceedings{zellers2019hellaswag,
  title={Hellaswag: Can a machine really finish your sentence?},
  author={Zellers, Rowan and Holtzman, Ari and Bisk, Yonatan and Farhadi, Ali and Choi, Yejin},
  booktitle={Proceedings of the 57th annual meeting of the association for computational linguistics},
  pages={4791--4800},
  year={2019}
}

@dataset{math_mcqa_2025,
  title={MATH-MCQA: A Multiple Choice Adaptation of the MATH Dataset},
  author={Biderman, Stella},
  year={2025},
  publisher={Hugging Face},
  url={https://huggingface.co/datasets/stellaathena/math_mcqa}
}

@article{geva2021did,
  title={Did aristotle use a laptop? a question answering benchmark with implicit reasoning strategies},
  author={Geva, Mor and Khashabi, Daniel and Segal, Elad and Khot, Tushar and Roth, Dan and Berant, Jonathan},
  journal={Transactions of the Association for Computational Linguistics},
  volume={9},
  pages={346--361},
  year={2021},
  publisher={MIT Press One Rogers Street, Cambridge, MA 02142-1209, USA journals-info~…}
}

@article{sattler2019robust,
  title={Robust and communication-efficient federated learning from non-iid data},
  author={Sattler, Felix and Wiedemann, Simon and M{\"u}ller, Klaus-Robert and Samek, Wojciech},
  journal={IEEE transactions on neural networks and learning systems},
  volume={31},
  number={9},
  pages={3400--3413},
  year={2019},
  publisher={IEEE}
}

@article{shahid2021communication,
  title={Communication efficiency in federated learning: Achievements and challenges},
  author={Shahid, Osama and Pouriyeh, Seyedamin and Parizi, Reza M and Sheng, Quan Z and Srivastava, Gautam and Zhao, Liang},
  journal={arXiv preprint arXiv:2107.10996},
  year={2021}
}

@inproceedings{mcmahan2017communication,
  title={Communication-efficient learning of deep networks from decentralized data},
  author={McMahan, Brendan and Moore, Eider and Ramage, Daniel and Hampson, Seth and y Arcas, Blaise Aguera},
  booktitle={Artificial intelligence and statistics},
  pages={1273--1282},
  year={2017},
  organization={Pmlr}
}

@inproceedings{yuan2025local,
  title={Local-cloud inference offloading for LLMs in multi-modal, multi-task, multi-dialogue settings},
  author={Yuan, Liangqi and Han, Dong-Jun and Wang, Shiqiang and Brinton, Christopher},
  booktitle={Proceedings of the Twenty-sixth International Symposium on Theory, Algorithmic Foundations, and Protocol Design for Mobile Networks and Mobile Computing},
  pages={201--210},
  year={2025}
}

@article{kuo2024federated,
  title={Federated lora with sparse communication},
  author={Kuo, Kevin and Raje, Arian and Rajesh, Kousik and Smith, Virginia},
  journal={arXiv preprint arXiv:2406.05233},
  year={2024}
}

@article{kullback1951information,
  title={On information and sufficiency},
  author={Kullback, Solomon and Leibler, Richard A},
  journal={The annals of mathematical statistics},
  volume={22},
  number={1},
  pages={79--86},
  year={1951},
  publisher={JSTOR}
}

@article{hinton2015distilling,
  title={Distilling the knowledge in a neural network},
  author={Hinton, Geoffrey and Vinyals, Oriol and Dean, Jeff},
  journal={arXiv preprint arXiv:1503.02531},
  year={2015}
}

@article{shenfeld2026self,
  title={Self-Distillation Enables Continual Learning},
  author={Shenfeld, Idan and Damani, Mehul and H{\"u}botter, Jonas and Agrawal, Pulkit},
  journal={arXiv preprint arXiv:2601.19897},
  year={2026}
}

@article{kaelbling1996reinforcement,
  title={Reinforcement learning: A survey},
  author={Kaelbling, Leslie Pack and Littman, Michael L and Moore, Andrew W},
  journal={Journal of artificial intelligence research},
  volume={4},
  pages={237--285},
  year={1996}
}

@article{shumailov2024ai,
  title={AI models collapse when trained on recursively generated data},
  author={Shumailov, Ilia and Shumaylov, Zakhar and Zhao, Yiren and Papernot, Nicolas and Anderson, Ross and Gal, Yarin},
  journal={Nature},
  volume={631},
  number={8022},
  pages={755--759},
  year={2024},
  publisher={Nature Publishing Group UK London}
}

@inproceedings{li2020don,
  title={Don’t say that! making inconsistent dialogue unlikely with unlikelihood training},
  author={Li, Margaret and Roller, Stephen and Kulikov, Ilia and Welleck, Sean and Boureau, Y-Lan and Cho, Kyunghyun and Weston, Jason},
  booktitle={Proceedings of the 58th Annual Meeting of the Association for Computational Linguistics},
  pages={4715--4728},
  year={2020}
}

@inproceedings{lagutin2021implicit,
  title={Implicit unlikelihood training: Improving neural text generation with reinforcement learning},
  author={Lagutin, Evgeny and Gavrilov, Daniil and Kalaidin, Pavel},
  booktitle={Proceedings of the 16th Conference of the European Chapter of the Association for Computational Linguistics: Main Volume},
  pages={1432--1441},
  year={2021}
}

@article{touvron2023llama,
  title={Llama: Open and efficient foundation language models},
  author={Touvron, Hugo and Lavril, Thibaut and Izacard, Gautier and Martinet, Xavier and Lachaux, Marie-Anne and Lacroix, Timoth{\'e}e and Rozi{\`e}re, Baptiste and Goyal, Naman and Hambro, Eric and Azhar, Faisal and others},
  journal={arXiv preprint arXiv:2302.13971},
  year={2023}
}

@article{alimisis2025we,
  title={Why do we need warm-up? A theoretical perspective},
  author={Alimisis, Foivos and Islamov, Rustem and Lucchi, Aurelien},
  journal={arXiv preprint arXiv:2510.03164},
  year={2025}
}

@article{green2001modelling,
  title={Modelling heterogeneity with and without the Dirichlet process},
  author={Green, Peter J and Richardson, Sylvia},
  journal={Scandinavian journal of statistics},
  volume={28},
  number={2},
  pages={355--375},
  year={2001},
  publisher={Wiley Online Library}
}

@inproceedings{reguieg2023comparative,
  title={A comparative evaluation of fedavg and per-fedavg algorithms for dirichlet distributed heterogeneous data},
  author={Reguieg, Hamza and El Hanjri, Mohammed and El Kamili, Mohamed and Kobbane, Abdellatif},
  booktitle={2023 10th International Conference on Wireless Networks and Mobile Communications (WINCOM)},
  pages={1--6},
  year={2023},
  organization={IEEE}
}

@article{imteaj2021survey,
  title={A survey on federated learning for resource-constrained IoT devices},
  author={Imteaj, Ahmed and Thakker, Urmish and Wang, Shiqiang and Li, Jian and Amini, M Hadi},
  journal={IEEE Internet of Things Journal},
  volume={9},
  number={1},
  pages={1--24},
  year={2021},
  publisher={IEEE}
}

@incollection{jain2022hugging,
  title={Hugging face},
  author={Jain, Shashank Mohan},
  booktitle={Introduction to transformers for NLP: With the hugging face library and models to solve problems},
  pages={51--67},
  year={2022},
  publisher={Springer}
}

@article{paszke2019pytorch,
  title={Pytorch: An imperative style, high-performance deep learning library},
  author={Paszke, Adam and Gross, Sam and Massa, Francisco and Lerer, Adam and Bradbury, James and Chanan, Gregory and Killeen, Trevor and Lin, Zeming and Gimelshein, Natalia and Antiga, Luca and others},
  journal={Advances in neural information processing systems},
  volume={32},
  year={2019}
}

@article{xhonneux2024efficient,
  title={Efficient adversarial training in llms with continuous attacks},
  author={Xhonneux, Sophie and Sordoni, Alessandro and G{\"u}nnemann, Stephan and Gidel, Gauthier and Schwinn, Leo},
  journal={Advances in Neural Information Processing Systems},
  volume={37},
  pages={1502--1530},
  year={2024}
}

@article{reddi2020adaptive,
  title={Adaptive federated optimization},
  author={Reddi, Sashank and Charles, Zachary and Zaheer, Manzil and Garrett, Zachary and Rush, Keith and Kone{\v{c}}n{\`y}, Jakub and Kumar, Sanjiv and McMahan, H Brendan},
  journal={arXiv preprint arXiv:2003.00295},
  year={2020}
}

@article{li2020federated,
  title={Federated optimization in heterogeneous networks},
  author={Li, Tian and Sahu, Anit Kumar and Zaheer, Manzil and Sanjabi, Maziar and Talwalkar, Ameet and Smith, Virginia},
  journal={Proceedings of Machine learning and systems},
  volume={2},
  pages={429--450},
  year={2020}
}

@article{baumgart2024not,
  title={Not all federated learning algorithms are created equal: A performance evaluation study},
  author={Baumgart, Gustav A and Shin, Jaemin and Payani, Ali and Lee, Myungjin and Kompella, Ramana Rao},
  journal={arXiv preprint arXiv:2403.17287},
  year={2024}
}

@misc{qwen2.5,
    title = {Qwen2.5: A Party of Foundation Models},
    url = {https://qwenlm.github.io/blog/qwen2.5/},
    author = {Qwen Team},
    month = {September},
    year = {2024}
}

@article{qwen2,
      title={Qwen2 Technical Report}, 
      author={An Yang and Baosong Yang and Binyuan Hui and Bo Zheng and Bowen Yu and Chang Zhou and Chengpeng Li and Chengyuan Li and Dayiheng Liu and Fei Huang and Guanting Dong and Haoran Wei and Huan Lin and Jialong Tang and Jialin Wang and Jian Yang and Jianhong Tu and Jianwei Zhang and Jianxin Ma and Jin Xu and Jingren Zhou and Jinze Bai and Jinzheng He and Junyang Lin and Kai Dang and Keming Lu and Keqin Chen and Kexin Yang and Mei Li and Mingfeng Xue and Na Ni and Pei Zhang and Peng Wang and Ru Peng and Rui Men and Ruize Gao and Runji Lin and Shijie Wang and Shuai Bai and Sinan Tan and Tianhang Zhu and Tianhao Li and Tianyu Liu and Wenbin Ge and Xiaodong Deng and Xiaohuan Zhou and Xingzhang Ren and Xinyu Zhang and Xipin Wei and Xuancheng Ren and Yang Fan and Yang Yao and Yichang Zhang and Yu Wan and Yunfei Chu and Yuqiong Liu and Zeyu Cui and Zhenru Zhang and Zhihao Fan},
      journal={arXiv preprint arXiv:2407.10671},
      year={2024}
}






\startcontents[sections]

\onecolumn

\newpage
\appendix
\begin{center}
    {\bf\Large Appendix}
\end{center}

\startcontents[sections]
\printcontents[sections]{l}{1}{\setcounter{tocdepth}{3}}

\newpage
\appendix

\section{LLM Usage}\label{end:llm-usage}
We employed the GPT-5 version of ChatGPT to improve the clarity of the manuscript (e.g., wording, paraphrasing, readability). Moreover, it was used as an aid in the creation of tables throughout the manuscript and code implementation. All scientific ideation was produced by the authors.

\section{Pseudocode of SPEAR}\label{appendix:pseudocode}
In this section, we present detailed pseudocode of the SPEAR algorithm. Specifically, Algorithm \ref{algo:SPEAR-client} outlines the client-side training process of SPEAR for a client $k$, with Algorithm \ref{algo:SPEAR-server} outlining the federated aggregation process at the server with win-trace based FedAvg. 

\subsection{Client-Side SPEAR}
\begin{algorithm}[H]
\caption{SPEAR on client $k$ at federated round $t$}\label{algo:SPEAR-client}
\begin{algorithmic}[1]
\Require Global weights $\boldsymbol{\theta}^{(t)}$, feedback generator $\mathsf{FB}(\cdot)$,
local steps $E$, loss weights $\lambda_{w}, \lambda_{l}$, margin $\mu$, tail tokens $\tau$,
max revision attempts $N$
\State Initialize $\boldsymbol{\theta}_k^{(t)} \leftarrow \boldsymbol{\theta}^{(t)}$
\State $\mathcal{W}_k^{(t)} \leftarrow \emptyset$, $\mathcal{L}_k^{(t)} \leftarrow \emptyset$
\Statex \hrulefill\ \textsc{Phase 1: Interaction} \hrulefill
\For{each $\mathbf{x}_{k,j} \sim \mathcal{D}_k$ sampled for client $k$}
    \State $\mathbf{c}^{(0)}_{k,j} \leftarrow \mathsf{encode}(\mathbf{x}_{k,j})$
    \State Generate $\mathbf{y}^{(0)}_{k,j} \sim p_{\boldsymbol{\theta}_k}(\cdot \mid \mathbf{c}^{(0)}_{k,j})$
    \If{$\mathsf{IsCorrect}(\mathbf{y}^{(0)}_{k,j},\, \mathbf{x}_{k,j})$}
        \State $\mathcal{W}_k^{(t)} \leftarrow \mathcal{W}_k^{(t)} \cup \{(\mathbf{c}^{(0)}_{k,j},\; \mathbf{y}^{(0)}_{k,j})\}$ \Comment{Win: correct on first try}
    \Else
        \State $\mathsf{fixed} \leftarrow \mathsf{false}$;\quad $\mathbf{y}^{\mathrm{prev}} \leftarrow \mathbf{y}^{(0)}_{k,j}$
        \For{$n = 1$ \textbf{to} $N$}
            \State $\mathbf{f}^{(n)}_{k,j} \leftarrow \mathsf{FB}(\mathbf{x}_{k,j},\, \mathbf{y}^{\mathrm{prev}})$ \Comment{Get feedback on previous attempt}
            \State $\mathbf{c}^{(n)}_{k,j} \leftarrow \mathbf{c}^{(0)}_{k,j} \concat \mathbf{y}^{\mathrm{prev}} \concat \mathbf{f}^{(n)}_{k,j}$
            \State Generate $\mathbf{y}^{(n)}_{k,j} \sim p_{\boldsymbol{\theta}_k}(\cdot \mid \mathbf{c}^{(n)}_{k,j})$
            \If{$\mathsf{IsCorrect}(\mathbf{y}^{(n)}_{k,j},\, \mathbf{x}_{k,j})$}
                \State $\mathcal{W}_k^{(t)} \leftarrow \mathcal{W}_k^{(t)} \cup \{(\mathbf{c}^{(0)}_{k,j},\; \mathbf{y}^{(n)}_{k,j})\}$ \Comment{Win: self-corrected at attempt $n$}
                \State $\mathcal{L}_k^{(t)} \leftarrow \mathcal{L}_k^{(t)} \cup \{(\mathbf{c}^{(0)}_{k,j},\; \mathbf{y}^{(0)}_{k,j},\; \alpha_{k,j}\!=\!1.0)\}$ \Comment{Lose: original was wrong}
                \State $\mathsf{fixed} \leftarrow \mathsf{true}$;\quad \textbf{break}
            \EndIf
            \State $\mathbf{y}^{\mathrm{prev}} \leftarrow \mathbf{y}^{(n)}_{k,j}$ \Comment{Feed latest attempt into next revision}
        \EndFor
        \If{\textbf{not} $\mathsf{fixed}$}
            \State $\mathcal{L}_k^{(t)} \leftarrow \mathcal{L}_k^{(t)} \cup \{(\mathbf{c}^{(0)}_{k,j},\; \mathbf{y}^{(0)}_{k,j},\; \alpha_{k,j}\!=\!0.5)\}$ \Comment{Lose: all $N$ attempts failed}
        \EndIf
    \EndIf
\EndFor
\Statex \hrulefill\ \textsc{Phase 2: Training} \hrulefill
\For{$e = 1$ to $E$}
    \State Sample batches from $\mathcal{W}_k^{(t)}$ and $\mathcal{L}_k^{(t)}$
    \State Compute $\ell_{\mathrm{win}}$: negative log-likelihood on win batch
    \State Compute $\ell_{\mathrm{lose}}$: confidence-weighted unlikelihood on lose batch, restricted to $\mathcal{T}_\tau(\mathbf{y}^-)$
    \State $\boldsymbol{\theta}_k^{(t)} \leftarrow \boldsymbol{\theta}_k^{(t)} - \eta\, \nabla\!\left(\lambda_{w}\ell_{\mathrm{win}} + \lambda_{l}\ell_{\mathrm{lose}}\right)$
\EndFor
\State \Return $\boldsymbol{\theta}_k^{(t)}$
\end{algorithmic}
\end{algorithm}

\subsection{Server-side Aggregation}
\begin{algorithm}[H]
\caption{Federated training with SPEAR}\label{algo:SPEAR-server}
\begin{algorithmic}[1]
\Require Initial global weights $\boldsymbol{\theta}^{(0)}$
\For{$t = 0,1,2,\dots$}
    \State Select client subset $\mathcal{K}_t$
    \State Broadcast $\boldsymbol{\theta}^{(t)}$ to clients in $\mathcal{K}_t$
    \State Each client $k \in \mathcal{K}_t$ returns $\boldsymbol{\theta}_k^{(t)}$ after running SPEAR
    \State Calculate number of win traces per client: $w_k^{(t)} = |\mathcal{W}_k^{(t)}|$
    \State Aggregate (FedAvg): $\boldsymbol{\theta}^{(t+1)} \leftarrow \sum_{k \in \mathcal{K}_t} \frac{w_k^{(t)}\, \boldsymbol{\theta}_k^{(t)}}{\sum_{k' \in \mathcal{K}_t} w_{k'}^{(t)}}$
\EndFor
\end{algorithmic}
\end{algorithm}

\section{Additional Experimental Results}\label{app:exps}
In this section, we consider experimental results in addition to those presented in Sec. \ref{sec:experiments}. Unless stated otherwise, the same settings considered in Sec. \ref{sec:experiments} are applied. 

\subsection{Ablation over $\tau$}
\begin{table*}[htbp]
\centering
\setlength{\tabcolsep}{7pt}
\renewcommand{\arraystretch}{1.15}
\caption{Ablation study on the hyperparameter $\tau$ for SPEAR on MathMCQA and StrategyQA.}
\label{tab:ablation_tau}
\begin{tabular}{llcc}
\toprule
\textbf{Model} & \boldmath$\tau$ & \textbf{MathMCQA} & \textbf{StrategyQA} \\
\midrule

\multirow{3}{*}{\textbf{Llama3.2-3B}}
& 8   & \textbf{55.89$\pm$}0.65 & 66.47$\pm$2.93 \\
& 64  & 55.13$\pm$0.77 & 66.42$\pm$1.57 \\
& 128 & 54.43$\pm$1.08 & \textbf{67.39$\pm$}2.38 \\
\midrule

\multirow{3}{*}{\textbf{Qwen2.5-1.5B}}
& 8   & \textbf{57.27$\pm$}0.41 & \textbf{64.39$\pm$}2.48 \\
& 64  & 54.61$\pm$4.17 & 64.00$\pm$1.49 \\
& 128 & 56.85$\pm$1.18 & 62.20$\pm$4.63 \\
\bottomrule
\end{tabular}
\end{table*}

We consider if the assumption that incorrect tokens concentrate near the end of the completion is valid by conducting an ablation study on $\tau$. We consider only MathMCQA and StrategyQA as HellaSwag and ARC-Challenge do not require the lengthy chain-of-reasoning (CoT) chains that MathMCQA and StrategyQA do. From the results in Table \ref{tab:ablation_tau}, we can see that the motivation of targeting the tail of the sequence holds. With the exception of StrategyQA on Llama3.2-3B, we see that targeting $8$ tokens leads to better performance in comparison to $64$ and $128$ tokens. In terms of why StrategyQA with Llama3.2-3B performs the best, it could be that the model is producing extremely lengthy outputs, thereby requiring more tokens to be targeted.  

\subsection{High Number of Clients with High Heterogeneity}

\begin{table*}[htbp]
\centering
\small
\setlength{\tabcolsep}{4pt} 
\renewcommand{\arraystretch}{1.15}
\caption{Performance Comparison of SPEAR vs. baselines across benchmark datasets on Llama3.2-3B and Qwen2.5-1.5B with 100 total clients and heterogeneity of \texttt{Dirichlet(0.3)}.}
\label{tab:100clients_results}
\begin{tabular}{llccccc}
\toprule
\textbf{Model} & \textbf{Algorithm} & \textbf{ARC-Challenge} & \textbf{HellaSwag} & \textbf{MathMCQA} & \textbf{StrategyQA} & \textbf{Avg.} \\
\midrule

\multirow{4}{*}{\textbf{Llama3.2-3B}}
& Feedback SFT & 65.67$\pm$0.98 & 83.07$\pm$1.63 & 41.58$\pm$0.36 & 51.53$\pm$0.01 & 60.46 \\
& GRPO \cite{shao2024deepseekmath}        & 66.68$\pm$0.64 & 66.84$\pm$2.03 & 37.01$\pm$8.73 & 49.34$\pm$0.87 & 54.97 \\
& OPSD \cite{zhao2026self}       & 25.10$\pm$0.63 & 24.57$\pm$1.88 & 0.00$\pm$0.00 & 0.00$\pm$0.00 & 12.42 \\
& RLTF-SD \cite{song2026expanding} & 61.86$\pm$2.56 & 42.19$\pm$16.46 & 40.29$\pm$0.81 & 25.76$\pm$13.26 & 42.53 \\
\rowcolor{blue!10}
& \textbf{SPEAR (Ours)} & 68.86$\pm$0.03 & 85.45$\pm$1.20 & 48.87$\pm$1.81 & 54.01$\pm$1.74 & \textbf{64.30} \\
\midrule

\multirow{4}{*}{\textbf{Qwen2.5-1.5B}}
& Feedback SFT & 73.76$\pm$1.15 & 76.15$\pm$1.14 & 44.90$\pm$0.16 & 48.47$\pm$0.12 & 60.82 \\
& GRPO \cite{shao2024deepseekmath}       & 76.45$\pm$0.60 & 78.32$\pm$1.06 & 48.95$\pm$1.93 & 50.22$\pm$1.75 & 63.48 \\
& OPSD \cite{zhao2026self}       & 24.75$\pm$0.27 & 26.49$\pm$0.04 & 0.00$\pm$0.00 & 0.00$\pm$0.00 & 12.81 \\
& RLTF-SD \cite{song2026expanding} & 66.34$\pm$8.75 & 71.41$\pm$7.88 & 46.59$\pm$5.89 & 50.00$\pm$1.53 & 58.59 \\
\rowcolor{blue!10}
& \textbf{SPEAR (Ours)} & 70.77$\pm$3.53 & 79.34$\pm$1.79 & 47.24$\pm$5.86 & 62.53$\pm$2.84 & \textbf{64.97} \\
\bottomrule
\end{tabular}
\end{table*}

We consider how SPEAR performs in comparison to the evaluated baselines with two additional dimensions common in FL settings: a large number of clients within the network, and high data heterogeneity between the clients. To model this, we consider a setting with 100 clients and data splitting across clients via a Dirichlet parameter of $0.3$ \cite{green2001modelling, reguieg2023comparative}. For this experiment, 3 clients are aggregated every FL round.

From the results in Table \ref{tab:100clients_results}, we note the performance advantages exhibited by SPEAR in Table \ref{tab:main_results} persist even when considering the additional dimensions of higher client cardinality and high heterogeneity in data between local clients. We can also see this consistency in performance persists across models, with it being the best performing method across both Llama3.2-3B and Qwen2.5-1.5B. SPEAR's consistency is maintained on all datasets, but the same can't be said for some baselines, such as FeedbackSFT, RLTF-SD, or GRPO (e.g., 25.76 on StrategyQA for RLTF-SD). Overall, this indicates that SPEAR is robust to changes in the number of clients and data heterogeneity considerations in comparison to the state-of-the-art baselines.

\subsection{Fine-tuning on Only Last $H$ Layers}

\begin{table*}[htbp]
\centering
\setlength{\tabcolsep}{4pt} 
\renewcommand{\arraystretch}{1.15}
\caption{Performance Comparison of SPEAR vs. baselines across benchmark datasets on Qwen2.5-1.5B, finetuned on the last $H=12$ layers.}
\label{tab:H-layers_results}
\begin{tabular}{lccccc}
\toprule
\textbf{Algorithm} & \textbf{ARC-Challenge} & \textbf{HellaSwag} & \textbf{MathMCQA} & \textbf{StrategyQA} & \textbf{Avg.} \\
\midrule

Feedback SFT ($H=12$) & 63.82$\pm$3.11 & 71.40$\pm$1.34 & 47.20$\pm$2.64 & 50.00$\pm$1.53 & 58.11 \\
GRPO ($H=12$) \cite{shao2024deepseekmath} & 71.50$\pm$0.09 & 70.81$\pm$1.20 & 41.65$\pm$2.73 & 47.74$\pm$0.73 & 57.93 \\
OPSD ($H=12$) \cite{zhao2026self} & 24.32$\pm$0.01 & 25.41$\pm$0.33 & 0.00$\pm$0.00 & 0.00$\pm$0.00 & 12.43 \\
RLTF-SD ($H = 12$) \cite{song2026expanding} & 71.20$\pm$0.04 & 69.62$\pm$2.88 & 38.81$\pm$3.70 & 48.33$\pm$0.15 & 56.99 \\
\rowcolor{blue!10}
\textbf{SPEAR ($H = 12$)} & 69.75$\pm$0.38 & 75.73$\pm$1.05 & 46.08$\pm$1.22 & 57.06$\pm$7.57 & \textbf{62.16} \\

\bottomrule
\end{tabular}
\end{table*}

We now consider a scenario more constrained than those presented in Sec. \ref{sec:experiments}, where only the last $H$ layers of the model are fine-tuned with LoRA. We consider $H=12$ on Qwen2.5-1.5B. The results are presented in Table \ref{tab:H-layers_results}.

Firstly, we can note the fact SPEAR remains the most high-performing method, with the highest average score exhibited in comparison to all baselines. We note that unlike the baselines, SPEAR is always consistently well performing across all datasets, in comparison to GRPO (e.g., only $41.65\%$ on MathMCQA) or Feedback SFT (e.g., only $63.82\%$ on ARC-Challenge). Moreover, as seen in Sec. \ref{sec:experiments}, we know GRPO is significantly slower than SPEAR, meaning even in a scenario where we only fine-tune on a limited number of layers, SPEAR is still the superior algorithm in comparison to the baselines. 

\subsection{Attempting Revision Only Once}\label{appendix:n=1_revision}

\begin{table}[htbp]
\centering
\small
\setlength{\tabcolsep}{4pt} 
\renewcommand{\arraystretch}{1.15}
\caption{Performance Comparison of SPEAR vs. baselines with $N=1$ revision attempts.}
\label{appendix:Nequals1_results}
\begin{tabular}{llccccc}
\toprule
\textbf{Model} & \textbf{Algorithm} & \textbf{ARC-Challenge} & \textbf{HellaSwag} & \textbf{MathMCQA} & \textbf{StrategyQA} & \textbf{Avg.} \\
\midrule

\multirow{4}{*}{\textbf{Llama3.2-3B}}
& Feedback SFT & 67.95$\pm$1.45 & 84.60$\pm$0.98 & 48.60$\pm$9.89 & 49.49$\pm$1.76 & 62.66 \\
& GRPO \cite{shao2024deepseekmath}        & 65.70$\pm$3.48 & 75.99$\pm$1.53 & 40.20$\pm$14.89 & 49.34$\pm$1.51 & 57.81 \\
& OPSD \cite{zhao2026self}       & 24.69$\pm$1.55 & 25.03$\pm$0.64 & 0.00$\pm$0.00 & 0.00$\pm$0.00 & 12.43 \\
& RLTF-SD \cite{song2026expanding} & 63.28$\pm$3.58 & 60.05$\pm$12.30 & 44.70$\pm$7.92 & 51.58$\pm$0.08 & 54.90 \\
\rowcolor{blue!10}
& \textbf{SPEAR ($N = 1$)} & 69.91$\pm$0.20 & 86.86$\pm$0.18 & 53.33$\pm$1.87 & 66.28$\pm$1.28 & \textbf{69.10} \\
\midrule

\multirow{4}{*}{\textbf{Qwen2.5-1.5B}}
& Feedback SFT & 73.89$\pm$3.38 & 76.14$\pm$3.61 & 54.93$\pm$2.04 & 50.51$\pm$1.76 & 63.87 \\
& GRPO \cite{shao2024deepseekmath}       & 76.30$\pm$0.17 & 79.00$\pm$1.65 & 50.87$\pm$3.86 & 51.38$\pm$6.44 & 64.39 \\
& OPSD \cite{zhao2026self}       & 25.03$\pm$1.23 & 24.92$\pm$0.69 & 0.00$\pm$0.00 & 0.00$\pm$0.00 & 12.49 \\
& RLTF-SD \cite{song2026expanding} & 75.03$\pm$0.40 & 60.65$\pm$20.78 & 44.70$\pm$7.41 & 54.29$\pm$10.08 & 58.67 \\
\rowcolor{blue!10}
& \textbf{SPEAR ($N = 1$)} & 76.68$\pm$0.56 & 82.44$\pm$0.75 & 56.51$\pm$1.57 & 56.72$\pm$6.44 & \textbf{68.09} \\
\bottomrule
\end{tabular}
\end{table}

\begin{figure}[htbp]
\centering

\begin{subfigure}[htbp]{0.245\textwidth}
    \centering
    \includegraphics[width=\linewidth]{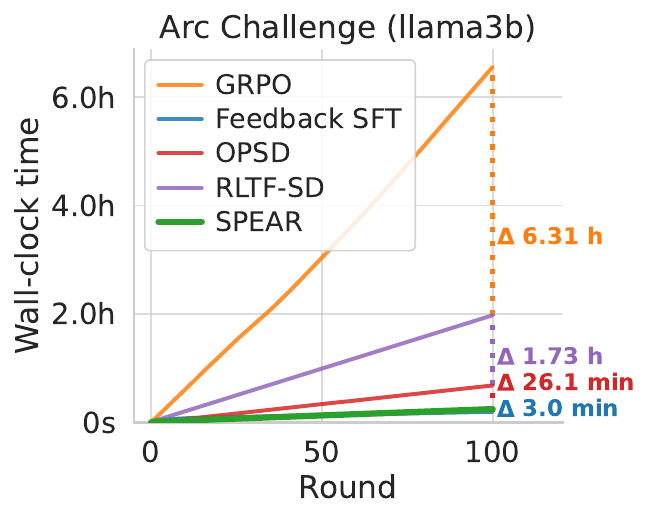}
    \caption{ARC-Challenge}
\end{subfigure}
\begin{subfigure}[htbp]{0.245\textwidth}
    \centering
    \includegraphics[width=\linewidth]{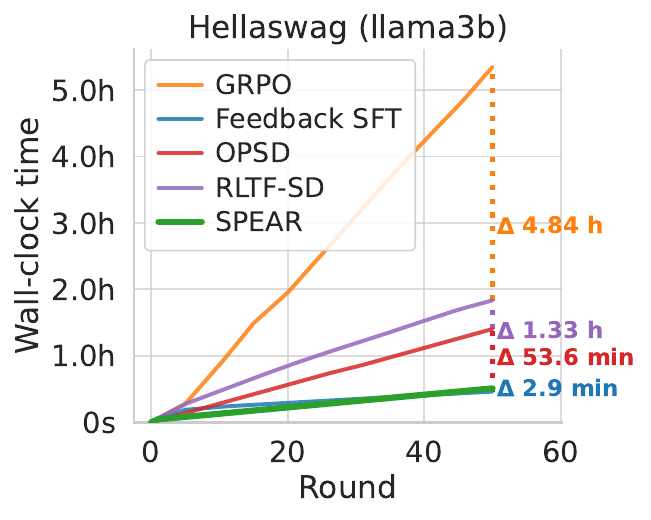}
    \caption{HellaSwag}
\end{subfigure}
\begin{subfigure}[htbp]{0.245\textwidth}
    \centering
    \includegraphics[width=\linewidth]{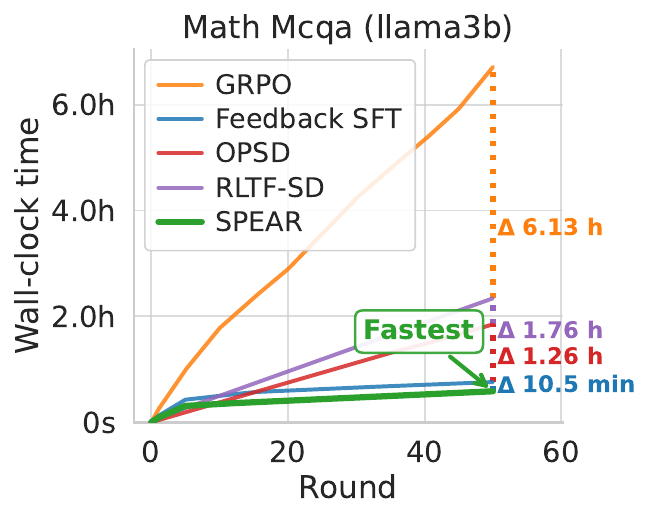}
    \caption{MathMCQA}
\end{subfigure}
\begin{subfigure}[htbp]{0.245\textwidth}
    \centering
    \includegraphics[width=\linewidth]{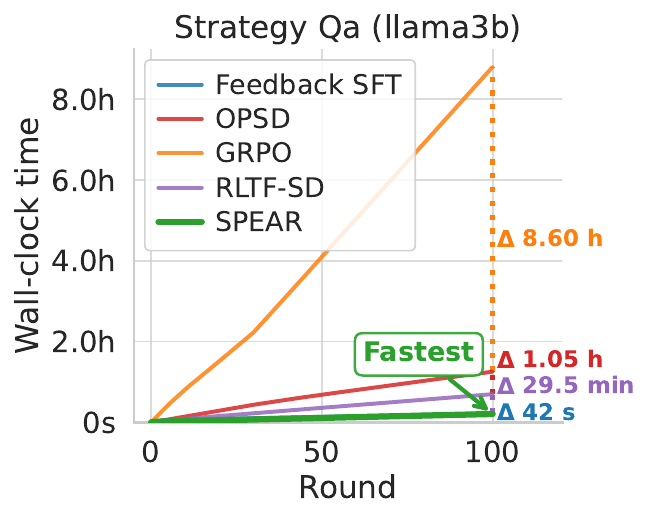}
    \caption{StrategyQA}
\end{subfigure}

\vspace{1.0em}

{\small \textbf{(a) Llama3.2-3B}}

\vspace{1.5em}

\begin{subfigure}[htbp]{0.245\textwidth}
    \centering
    \includegraphics[width=\linewidth]{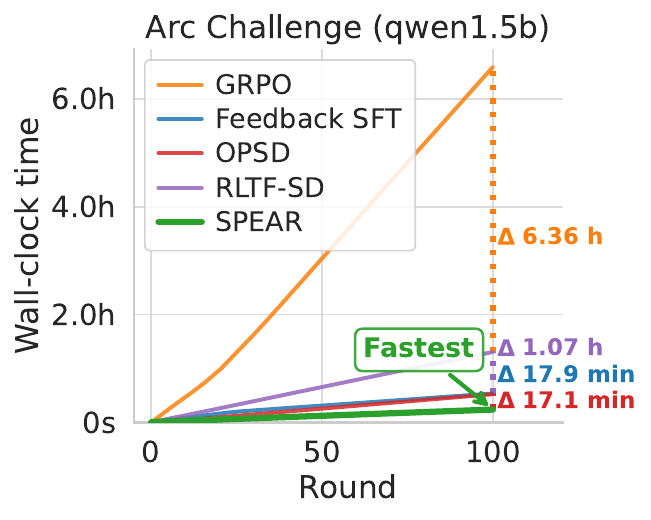}
    \caption{ARC-Challenge}
\end{subfigure}
\begin{subfigure}[htbp]{0.245\textwidth}
    \centering
    \includegraphics[width=\linewidth]{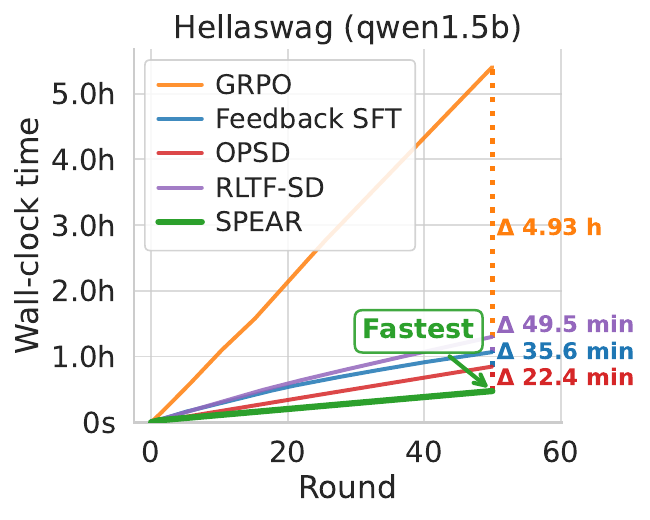}
    \caption{HellaSwag}
\end{subfigure}
\begin{subfigure}[htbp]{0.245\textwidth}
    \centering
    \includegraphics[width=\linewidth]{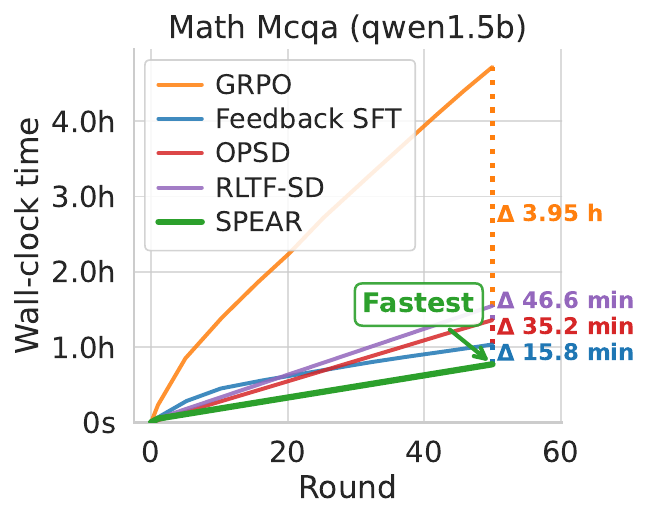}
    \caption{MathMCQA}
\end{subfigure}
\begin{subfigure}[htbp]{0.245\textwidth}
    \centering
    \includegraphics[width=\linewidth]{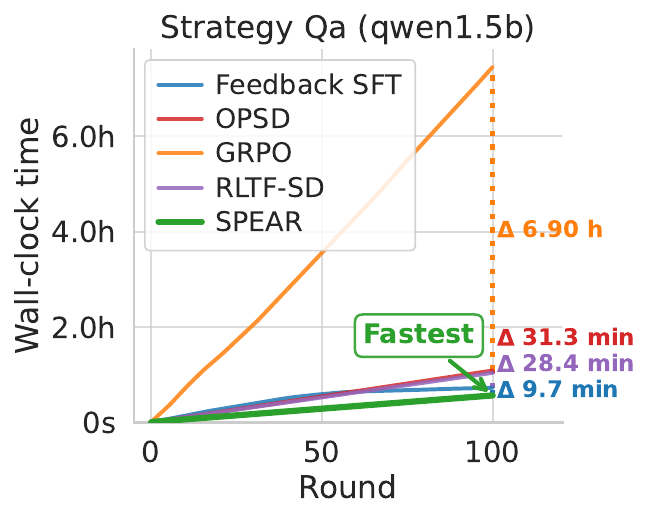}
    \caption{StrategyQA}
\end{subfigure}

\vspace{1.0em}

{\small \textbf{(b) Qwen2.5-1.5B}}


\caption{Cumulative training time of each algorithm on (a) Llama3.2-3B and (b) Qwen2.5-1.5B models for $N=1$ maximum revisions on SPEAR.}
\label{appendix:n=1_time_curves}

\vspace{-3mm}
\end{figure}

In Sec. \ref{sec:experiments}, since we consider $N=2$ attempts at revision, we also seek to see if $N=1$ still maintains superior performance in comparison to the baselines. We consider both the performance of SPEAR with $N=1$ and the run time in comparison to the baselines. Accuracy results are presented in Table \ref{appendix:Nequals1_results}.

Firstly, from Table \ref{appendix:Nequals1_results}, we note that even with $N=1$, the performance of SPEAR remains superior to the evaluated baselines on all settings. Likewise, the performance across all datasets and models remains high, meaning that SPEAR is robust to settings where the number of revision attempts are set low. 

We also note the training time of doing SPEAR for $N=1$ revisions in comparison to the baselines in Fig. \ref{appendix:n=1_time_curves}. We can see similarly to Fig. \ref{fig:training_curves} in Sec. \ref{sec:experiments}, SPEAR is in a majority of scenarios the fastest algorithm. When it isn't (ARC-Challenge and HellaSwag on Llama3.2-3B, with Feedback SFT being the best), it is very close to the fastest, with a difference of only a couple of minutes. Moreover, as seen in Table \ref{appendix:Nequals1_results}, the slightly slower training time is made up for with performance gains in accuracy. When further comparing with Fig. \ref{fig:training_curves} in Sec. \ref{sec:experiments}, we can see the margin of cumulative time difference between the slower baselines increases, which intuitively makes sense as when the number of generations is lower, less overall tokens are generated. Overall, this means that in settings where training must be conducted in a quick manner, SPEAR is able to accommodate only one attempt at revision without significant change to the overall performance after multiple FL rounds.

\subsection{Additional Comparisons of $N=1$ vs. $N=2$}

\begin{figure}[htbp]
\centering

\begin{subfigure}[htbp]{0.245\textwidth}
    \centering
    \includegraphics[width=\linewidth]{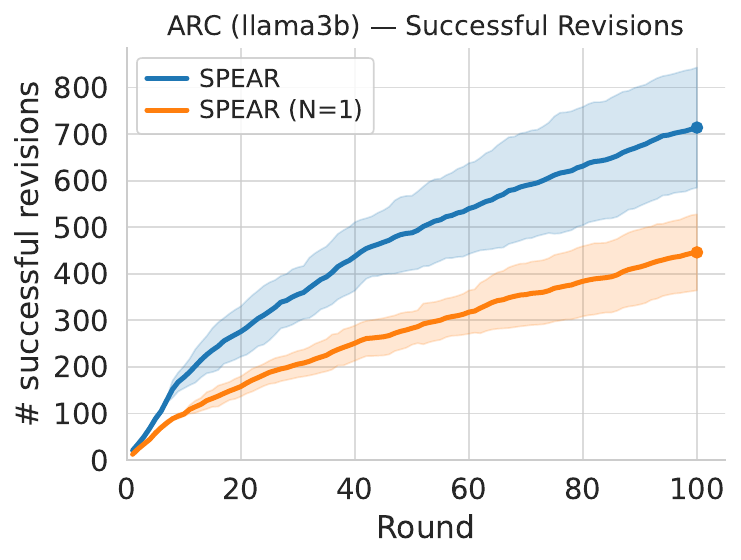}
    \caption{ARC-Challenge}
\end{subfigure}
\begin{subfigure}[htbp]{0.245\textwidth}
    \centering
    \includegraphics[width=\linewidth]{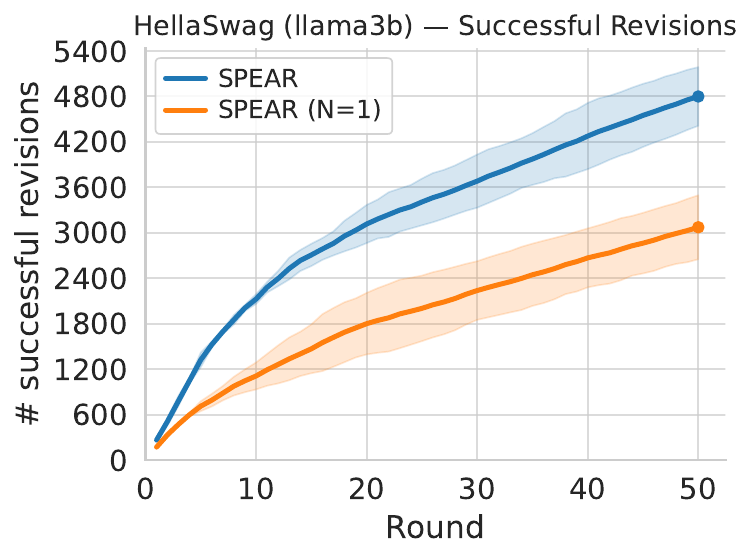}
    \caption{HellaSwag}
\end{subfigure}
\begin{subfigure}[htbp]{0.245\textwidth}
    \centering
    \includegraphics[width=\linewidth]{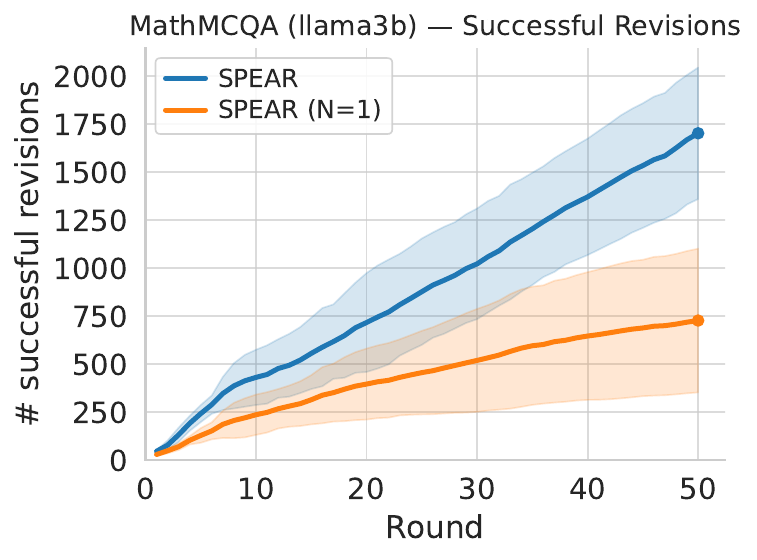}
    \caption{MathMCQA}
\end{subfigure}
\begin{subfigure}[htbp]{0.245\textwidth}
    \centering
    \includegraphics[width=\linewidth]{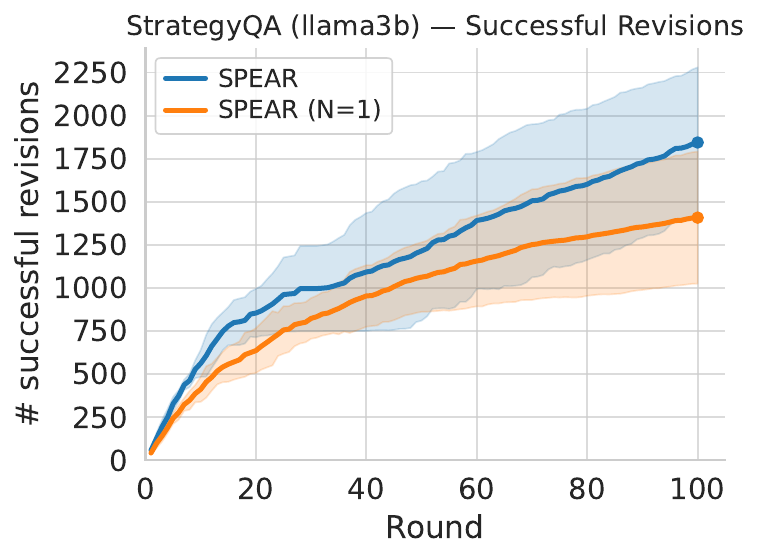}
    \caption{StrategyQA}
\end{subfigure}

\vspace{1.0em}

{\small \textbf{(a) Llama3.2-3B}}

\vspace{1.5em}

\begin{subfigure}[htbp]{0.245\textwidth}
    \centering
    \includegraphics[width=\linewidth]{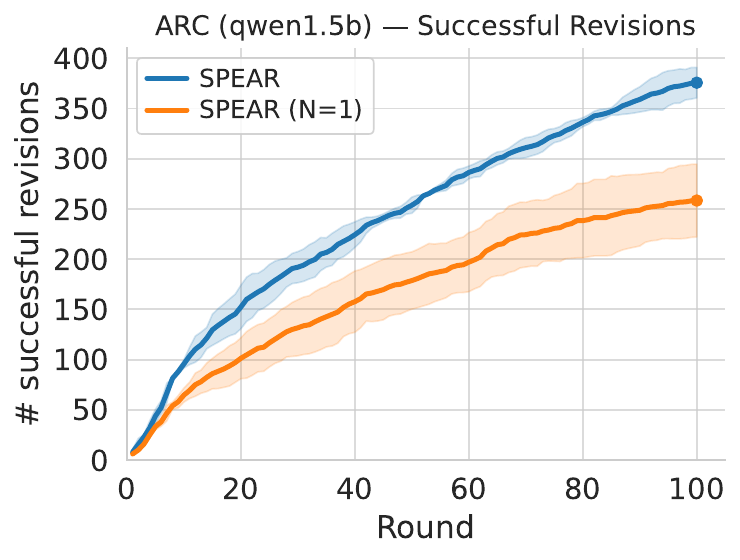}
    \caption{ARC-Challenge}
\end{subfigure}
\begin{subfigure}[htbp]{0.245\textwidth}
    \centering
    \includegraphics[width=\linewidth]{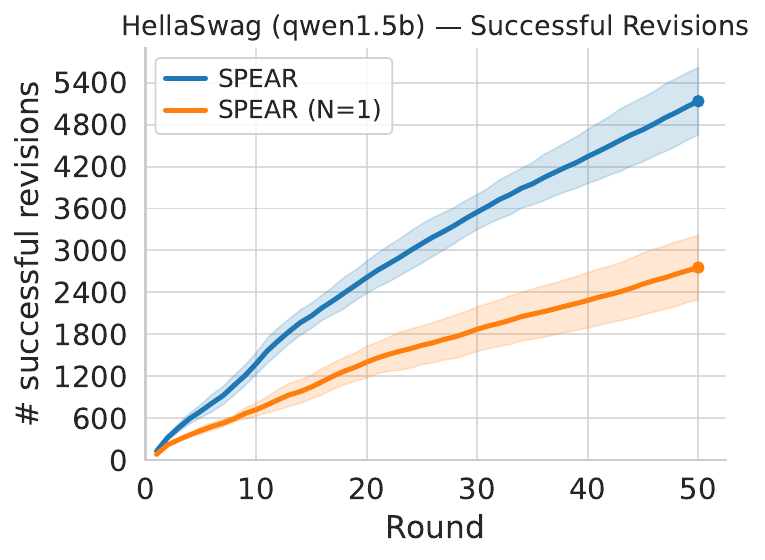}
    \caption{HellaSwag}
\end{subfigure}
\begin{subfigure}[htbp]{0.245\textwidth}
    \centering
    \includegraphics[width=\linewidth]{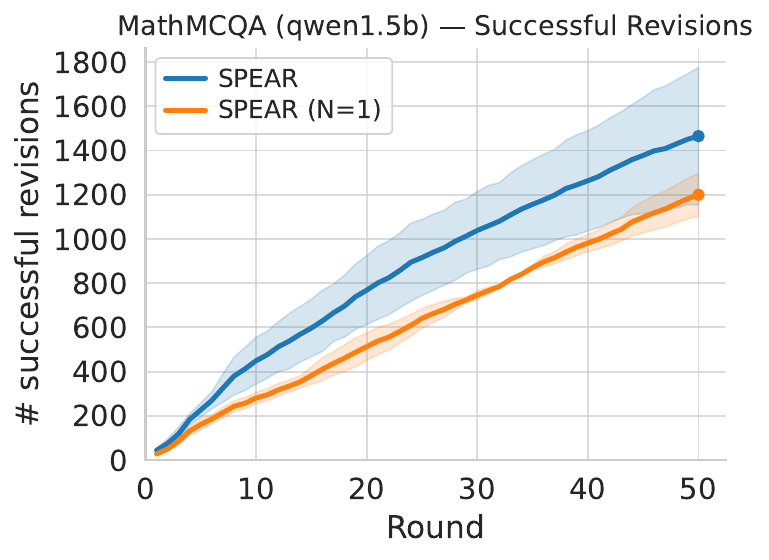}
    \caption{MathMCQA}
\end{subfigure}
\begin{subfigure}[htbp]{0.245\textwidth}
    \centering
    \includegraphics[width=\linewidth]{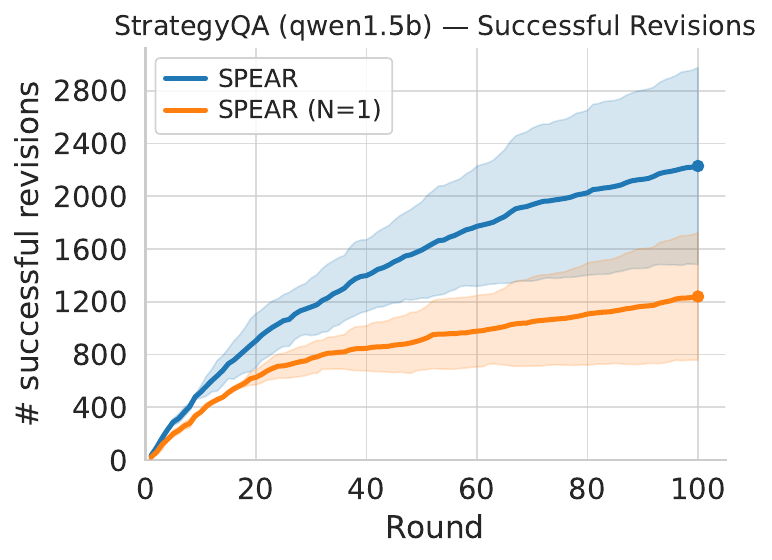}
    \caption{StrategyQA}
\end{subfigure}

\vspace{1.0em}

{\small \textbf{(b) Qwen2.5-1.5B}}

\vspace{0.5em}

\caption{Cumulative successful revisions of SPEAR with $N=1$ and $N=2$ (default) on (a) Llama3.2-3B and (b) Qwen2.5-1.5B models.}
\label{appendix:cumulative_successful_revisions}

\end{figure}

\begin{figure}[htbp]
\centering

\begin{subfigure}[htbp]{0.245\textwidth}
    \centering
    \includegraphics[width=\linewidth]{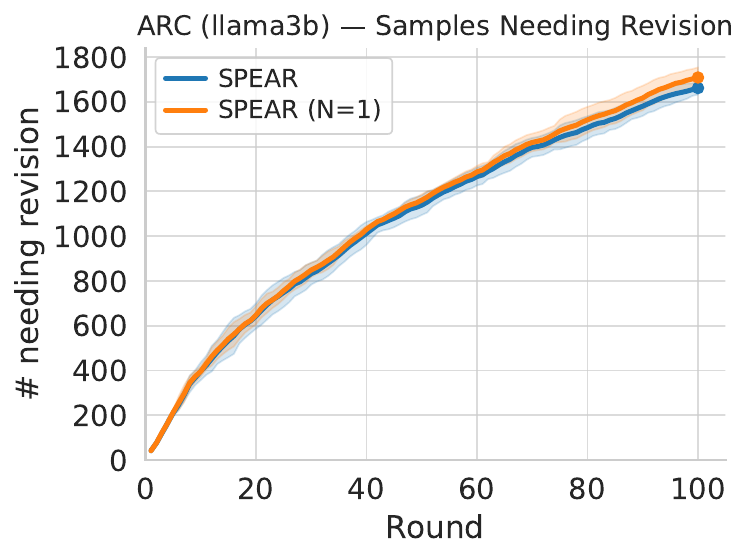}
    \caption{ARC-Challenge}
\end{subfigure}
\begin{subfigure}[htbp]{0.245\textwidth}
    \centering
    \includegraphics[width=\linewidth]{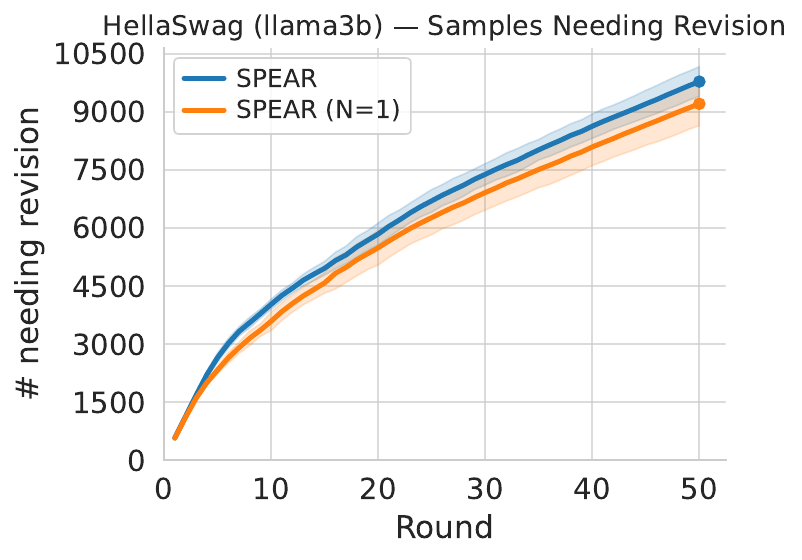}
    \caption{HellaSwag}
\end{subfigure}
\begin{subfigure}[htbp]{0.245\textwidth}
    \centering
    \includegraphics[width=\linewidth]{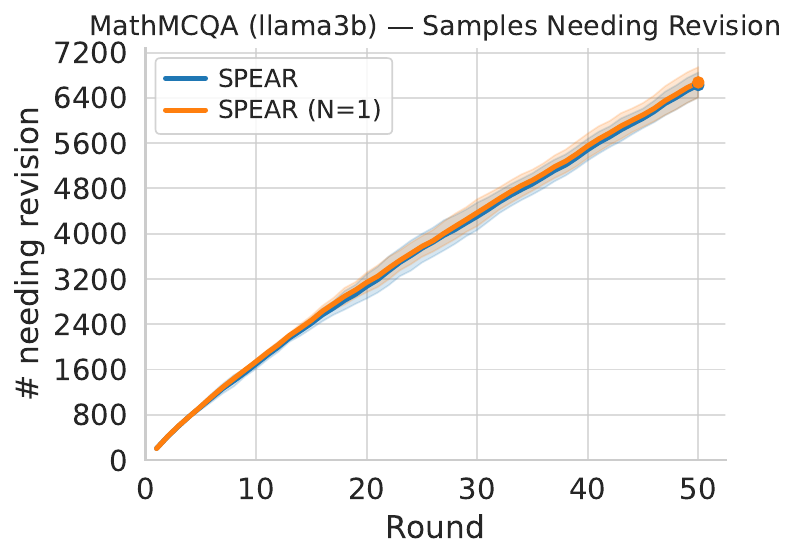}
    \caption{MathMCQA}
\end{subfigure}
\begin{subfigure}[htbp]{0.245\textwidth}
    \centering
    \includegraphics[width=\linewidth]{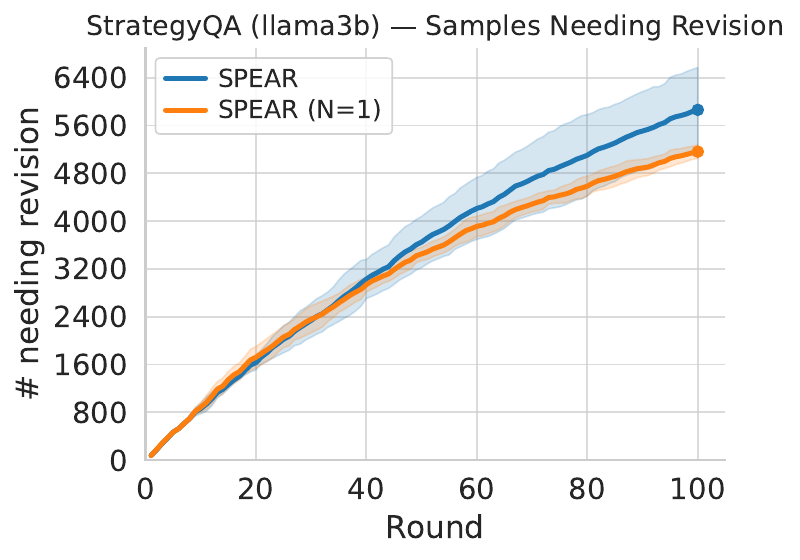}
    \caption{StrategyQA}
\end{subfigure}

\vspace{1.0em}

{\small \textbf{(a) Llama3.2-3B}}

\vspace{1.5em}

\begin{subfigure}[htbp]{0.245\textwidth}
    \centering
    \includegraphics[width=\linewidth]{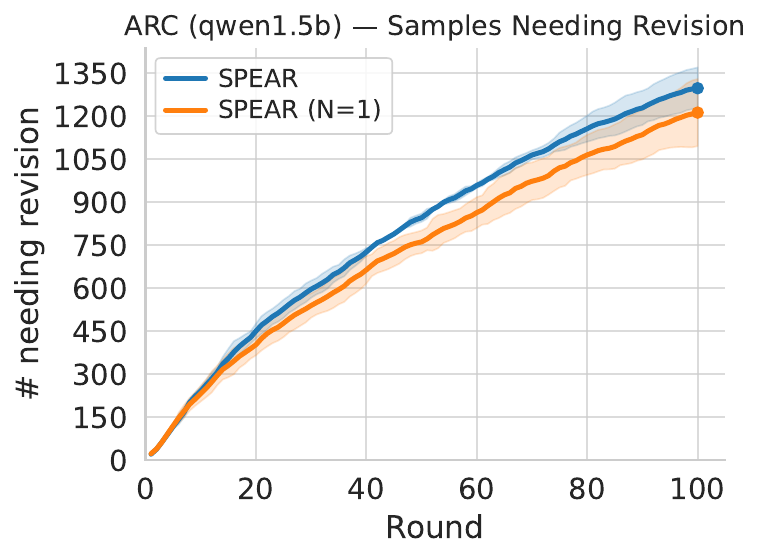}
    \caption{ARC-Challenge}
\end{subfigure}
\begin{subfigure}[htbp]{0.245\textwidth}
    \centering
    \includegraphics[width=\linewidth]{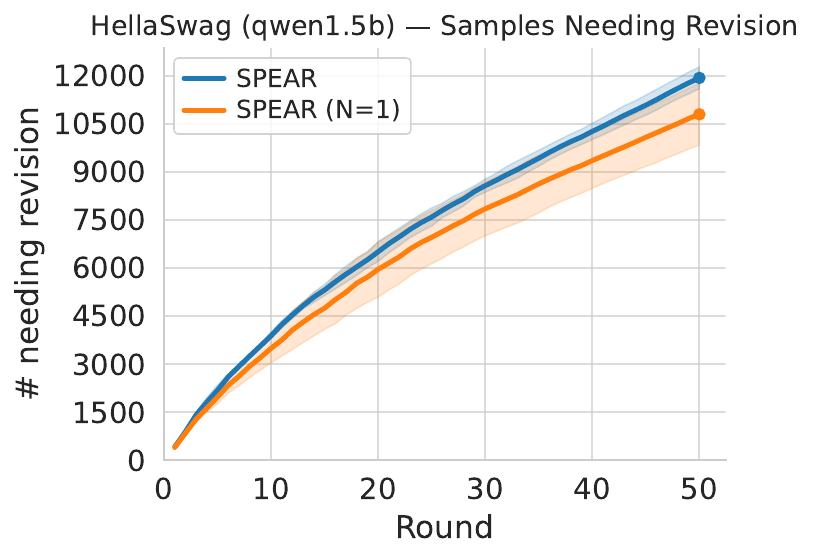}
    \caption{HellaSwag}
\end{subfigure}
\begin{subfigure}[htbp]{0.245\textwidth}
    \centering
    \includegraphics[width=\linewidth]{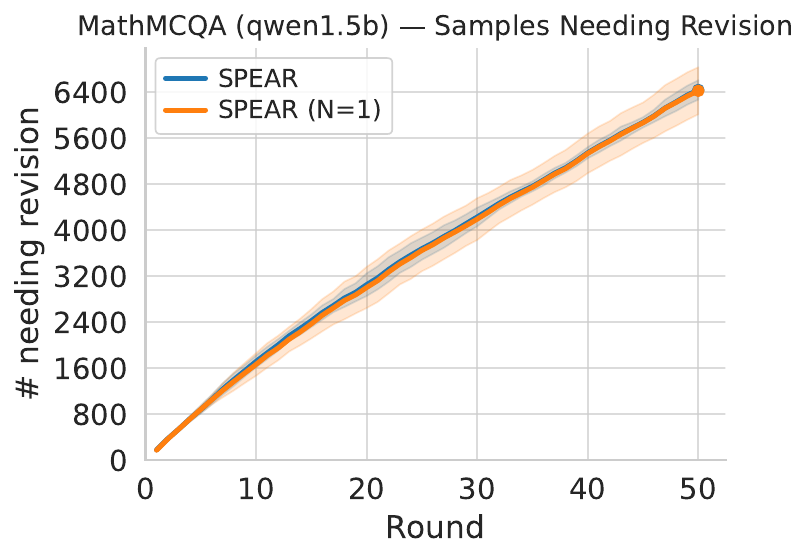}
    \caption{MathMCQA}
\end{subfigure}
\begin{subfigure}[htbp]{0.245\textwidth}
    \centering
    \includegraphics[width=\linewidth]{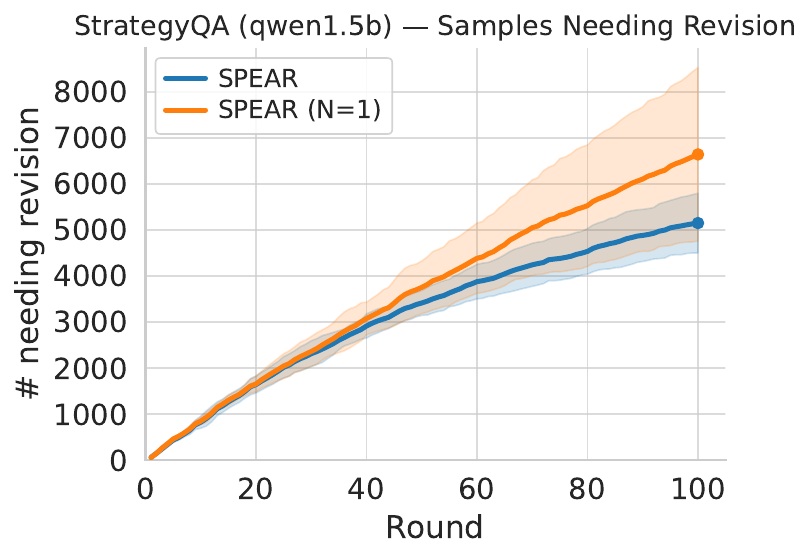}
    \caption{StrategyQA}
\end{subfigure}

\vspace{1.0em}

{\small \textbf{(b) Qwen2.5-1.5B}}

\vspace{0.5em}

\caption{Cumulative number of samples needing revision of SPEAR with $N=1$ and $N=2$ (default) on (a) Llama3.2-3B and (b) Qwen2.5-1.5B models.}
\label{appendix:cumulative_samples_needing_revision}

\end{figure}

\begin{figure}[htbp]
\centering

\begin{subfigure}[htbp]{0.245\textwidth}
    \centering
    \includegraphics[width=\linewidth]{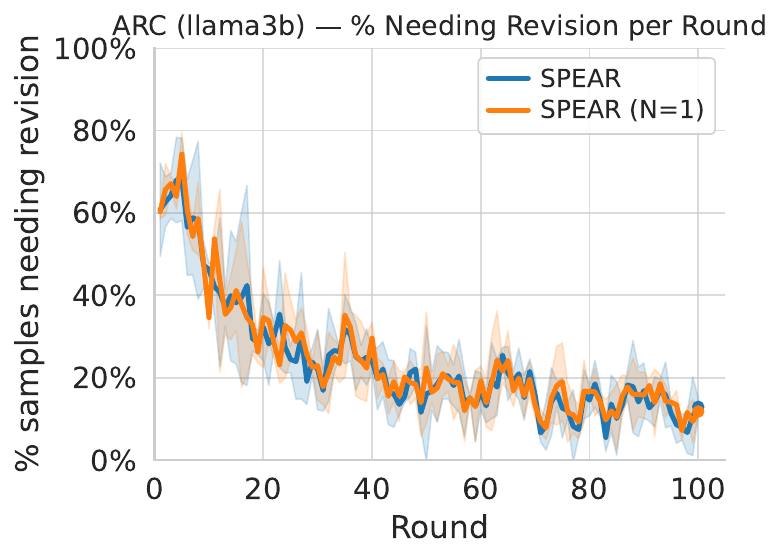}
    \caption{ARC-Challenge}
\end{subfigure}
\begin{subfigure}[htbp]{0.245\textwidth}
    \centering
    \includegraphics[width=\linewidth]{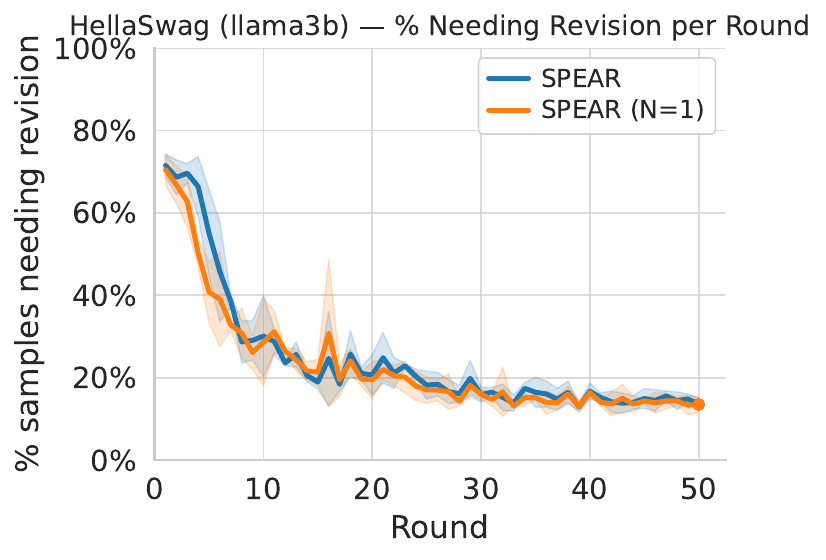}
    \caption{HellaSwag}
\end{subfigure}
\begin{subfigure}[htbp]{0.245\textwidth}
    \centering
    \includegraphics[width=\linewidth]{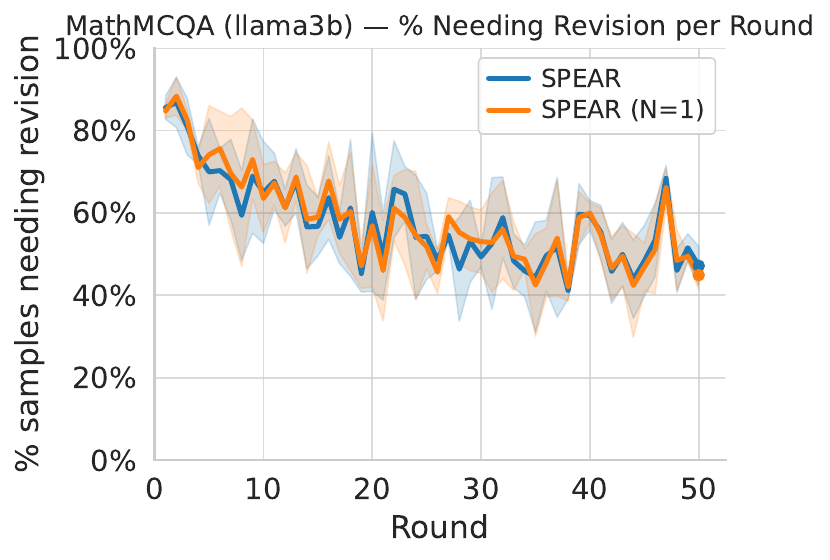}
    \caption{MathMCQA}
\end{subfigure}
\begin{subfigure}[htbp]{0.245\textwidth}
    \centering
    \includegraphics[width=\linewidth]{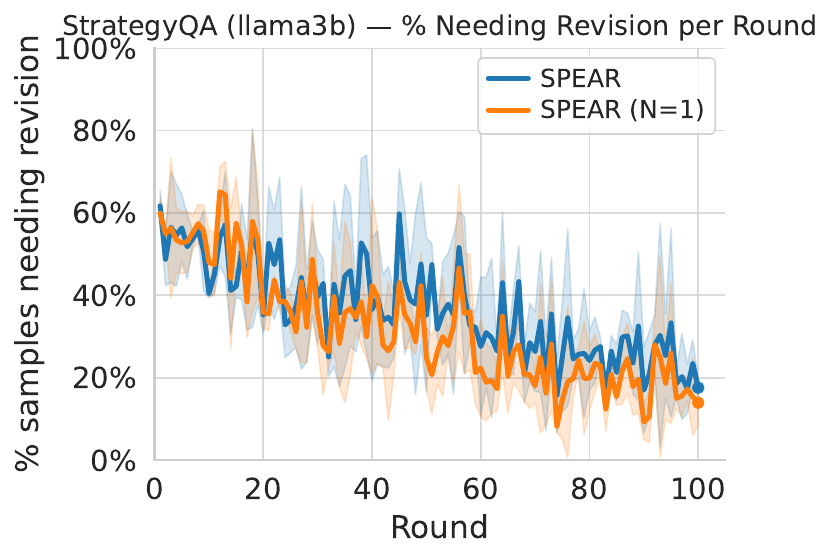}
    \caption{StrategyQA}
\end{subfigure}

\vspace{1.0em}

{\small \textbf{(a) Llama3.2-3B}}

\vspace{1.5em}

\begin{subfigure}[htbp]{0.245\textwidth}
    \centering
    \includegraphics[width=\linewidth]{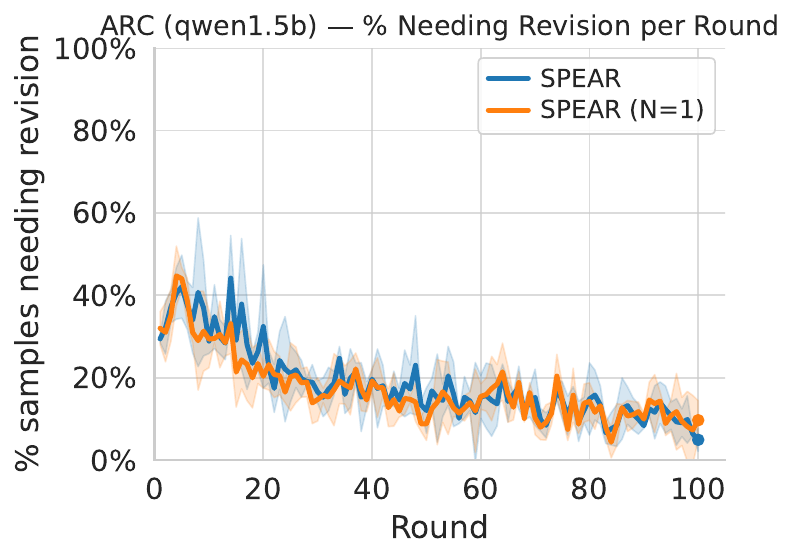}
    \caption{ARC-Challenge}
\end{subfigure}
\begin{subfigure}[htbp]{0.245\textwidth}
    \centering
    \includegraphics[width=\linewidth]{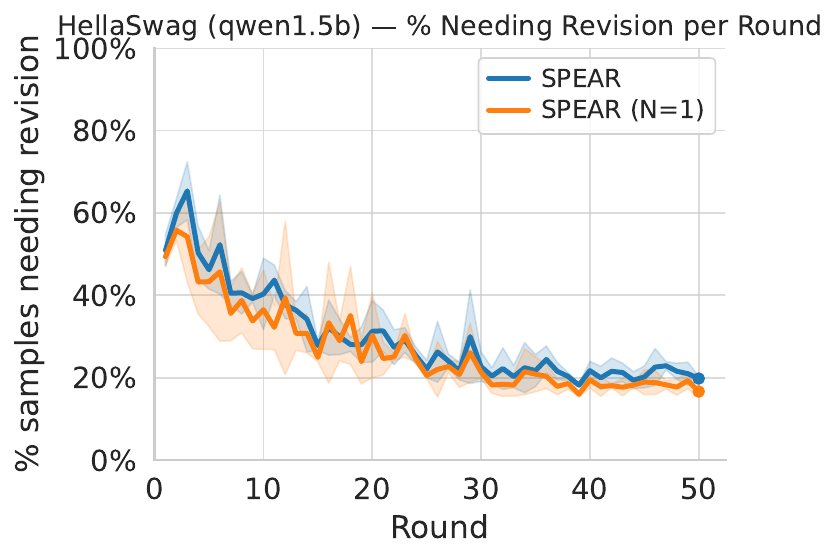}
    \caption{HellaSwag}
\end{subfigure}
\begin{subfigure}[htbp]{0.245\textwidth}
    \centering
    \includegraphics[width=\linewidth]{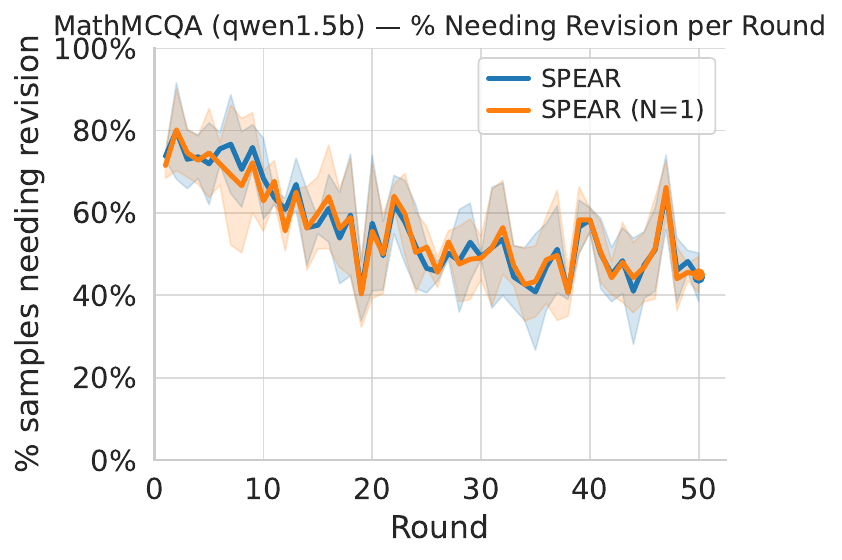}
    \caption{MathMCQA}
\end{subfigure}
\begin{subfigure}[htbp]{0.245\textwidth}
    \centering
    \includegraphics[width=\linewidth]{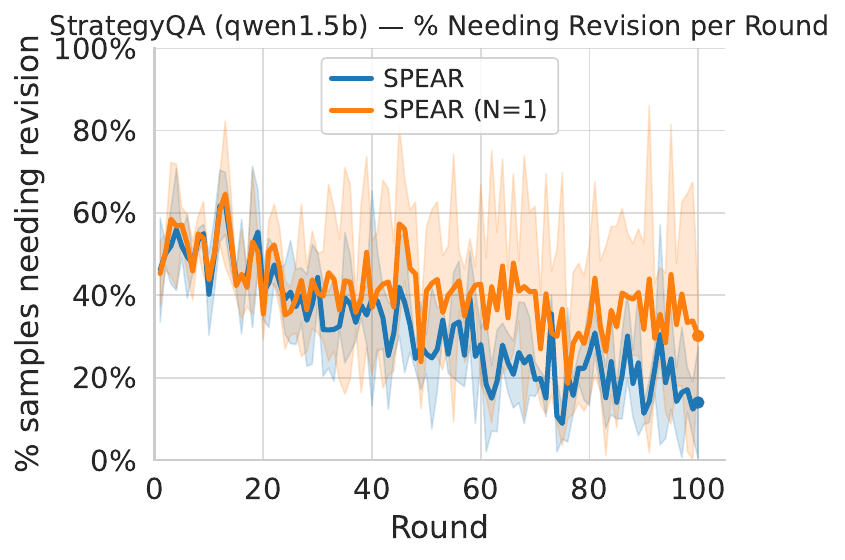}
    \caption{StrategyQA}
\end{subfigure}

\vspace{1.0em}

{\small \textbf{(b) Qwen2.5-1.5B}}

\vspace{0.5em}

\caption{Percentage of samples per-round needing revision of SPEAR with $N=1$ and $N=2$ (default) on (a) Llama3.2-3B and (b) Qwen2.5-1.5B models.}
\label{appendix:pct_needed_per_round}

\end{figure}

Based off the results in Appendix \ref{appendix:n=1_revision}, we additionally seek to observe the following characters of $N$: the cumulative number of successful revisions (Fig. \ref{appendix:cumulative_successful_revisions}), the number of cumulative samples needing revision (Fig. \ref{appendix:cumulative_samples_needing_revision}), and the percentage of samples needing revision each round over the course of FL training (Fig. \ref{appendix:pct_needed_per_round}).  

From Fig. \ref{appendix:cumulative_successful_revisions}, we can first observe that SPEAR with $N=2$ achieves a higher level of total success in terms of number of samples it is able to correct in comparison to only attempting revision one time. This is to be expected as allowing the model to make multiple attempts at generation allows it to explore differing solutions to come to the correct answer. However, the total number of samples overall needing revision over the course of training does not significantly change, as indicated in Fig. \ref{appendix:cumulative_samples_needing_revision}. This observation further matches with Appendix \ref{appendix:n=1_revision}, where the performance of SPEAR $(N=1)$ remains higher than the baselines and is comparable to the results of SPEAR in Table \ref{tab:main_results}. This means that while attempting more revisions does increase the number of correct traces, it does not necessarily mean the total number of samples that will need to be revised change significantly in the context of the entire data. 

Secondly, we can note that in Fig. \ref{appendix:pct_needed_per_round} that over time, the percentage of samples per-round needing revision decreases, indicating that the SPEAR algorithm is effective at learning from the win-lose trace dual loss function. Additionally, in most cases, the difference between $N=1$ and $N=2$ revision attempts does not significantly differ from one another. Overall, this further reinforces the results from Appendix \ref{appendix:n=1_revision}, where conducting revision only once with SPEAR is still an effective means of fine-tuning in an online manner with incomplete feedback.

\subsection{Ablation on $\lambda_{l}$}

\begin{table*}[htbp]
\centering
\setlength{\tabcolsep}{7pt}
\renewcommand{\arraystretch}{1.15}
\caption{Ablation study on $\lambda_{l}$ for SPEAR across benchmark datasets on Llama3.2-3B and Qwen2.5-1.5B.}
\label{appendix:tab_ablation_lose}
\begin{tabular}{llcccc}
\toprule
\textbf{Model} & \textbf{$\lambda_{l}$} & \textbf{ARC-Challenge} & \textbf{HellaSwag} & \textbf{MathMCQA} & \textbf{StrategyQA} \\
\midrule

\multirow{4}{*}{\textbf{Llama3.2-3B}}
& 0.1 & 69.17$\pm$0.85 & \textbf{86.72$\pm$}0.67 & \textbf{56.70$\pm$}0.78 & 65.84$\pm$0.78 \\
& 0.3 & 68.83$\pm$0.92 & 86.46$\pm$1.10 & 53.83$\pm$0.83 & \textbf{67.69}$\pm$1.66 \\
& 0.5 & 69.48$\pm$1.29 & 86.81$\pm$0.09 & 52.93$\pm$0.21 & 66.08$\pm$2.61 \\
& 0.7 & \textbf{69.57}$\pm$1.11 & 86.17$\pm$0.51 & 52.35$\pm$1.99 & 66.72$\pm$1.14 \\
\midrule

\multirow{4}{*}{\textbf{Qwen2.5-1.5B}}
& 0.1 & 76.93$\pm$1.41 & 80.65$\pm$0.79 & \textbf{59.07$\pm$}1.16 & \textbf{66.08$\pm$}3.20 \\
& 0.3 & 75.57$\pm$3.42 & \textbf{82.03}$\pm$1.22 & 56.31$\pm$0.54 & 62.93$\pm$1.55 \\
& 0.5 & \textbf{78.01}$\pm$0.46 & 79.11$\pm$2.50 & 55.87$\pm$0.84 & 59.58$\pm$2.13 \\
& 0.7 & 76.51$\pm$1.01 & 82.02$\pm$0.42 & 51.10$\pm$4.04 & 64.29$\pm$2.08 \\
\bottomrule
\end{tabular}
\end{table*}

As outlined in Sec. \ref{subsec:final-training-phase}, the confidence-gated and tail-targeted unlikelihood loss plays an important role in the overall formulation of SPEAR. For this reason, we present an ablation study on how the weighting given to this loss ($\lambda_{l}$) affects the overall performance of the proposed algorithm.

From the results presented in Table \ref{appendix:tab_ablation_lose}, we first note that while there is no set value that is best across all datasets, setting a lower value to $\lambda_{l}$ often results in better performance. For example, with MathMCQA, the difference between $0.1$ and $0.5$ weighting is around $3$-$4\%$, with around a $6\%$ difference from $0.1$ to $0.5$ on StrategyQA for Qwen2.5-1.5B. Overall, across all datasets, the optimal value seems to range from $0.1$ to $0.3$ in weight. We believe this to occur due to the fact that weighting the lose traces higher leads to less emphasis to be put on the win traces. Because accuracy is primarily driven by MLE on the win traces, the SFT loss should remain the dominant training signal. Our auxiliary confidence-gated, tail-targeted unlikelihood loss improves results by steering the model away from some completions, but if weighted too heavily, it can pull optimization away from the main objective and hurt accuracy. However, we can see that for ARC-Challenge, the converse is true--higher $\lambda_{l}$ results in better performance. For this factor, we believe this is because the outputs we prompt ARC-Challenge to produce is inherently shorter than datasets such as MathMCQA or StrategyQA. Overall, while the results indicate that setting $\lambda_l$ lower than $\lambda_w$ is more appropriate, it is also important to consider the length of the generated outputs.

\subsection{SPEAR on Differing FL Methods}

\begin{table*}[htbp]
\centering
\setlength{\tabcolsep}{5pt}
\renewcommand{\arraystretch}{1.15}
\caption{Usage of differing common FL algorithms on SPEAR across benchmark datasets on Llama3.2-3B and Qwen2.5-1.5B.}
\label{appendix:tab_fed_algos}
\begin{tabular}{llcccc}
\toprule
\textbf{Model} & \textbf{FL Method} & \textbf{ARC-Challenge} & \textbf{HellaSwag} & \textbf{MathMCQA} & \textbf{StrategyQA} \\
\midrule

\multirow{4}{*}{\textbf{Llama3.2-3B}}
& FedAvg & 69.58$\pm$0.47 & 86.99$\pm$0.48 & 56.25$\pm$0.05 & 65.36$\pm$0.58 \\
& FedProx & 69.58$\pm$0.47 & 87.13$\pm$0.26 & 54.06$\pm$0.06 & 65.36$\pm$0.58 \\
& FedAdam & 68.09$\pm$0.17 & 85.20$\pm$0.27 & 53.74$\pm$1.40 & 56.40$\pm$6.77 \\
& FedYogi & 70.22$\pm$0.43 & 84.88$\pm$0.38 & 53.72$\pm$1.52 & 67.98$\pm$0.58 \\
\midrule

\multirow{4}{*}{\textbf{Qwen2.5-1.5B}}
& FedAvg & 76.92$\pm$1.41 & 80.55$\pm$0.77 & 59.60$\pm$0.70 & 64.48$\pm$1.60 \\
& FedProx & 76.92$\pm$1.41 & 80.67$\pm$0.50 & 57.07$\pm$0.77 & 65.87$\pm$2.98 \\
& FedAdam & 76.15$\pm$1.49 & 76.34$\pm$1.33 & 55.86$\pm$0.98 & 65.57$\pm$0.51 \\
& FedYogi & 67.58$\pm$6.40 & 72.12$\pm$2.40 & 54.63$\pm$0.75 & 64.56$\pm$0.36 \\
\bottomrule
\end{tabular}
\end{table*}

Despite FedAvg \cite{mcmahan2017communication} being the dominant default adopted in most FL works, this section seeks to see if SPEAR still maintains good performance with differing, but common FL protocols. For this, we consider three well-established alternatives to FedAvg: FedProx \cite{li2020federated}, FedAdam \cite{reddi2020adaptive, baumgart2024not}, and FedYogi \cite{reddi2020adaptive, baumgart2024not}. For FedProx, the proximal term is set to $10^{-3}$, with $\beta_1$ and $\beta_2$ set to $0.9$ and $0.99$ for FedAdam and FedYogi. In addition, the adaptivity hyperparamter for FedAdam and FedYogi is set to $0.005$. Results are presented in Table \ref{appendix:tab_fed_algos}.

From Table \ref{appendix:tab_fed_algos}, we note that SPEAR maintains relatively good and consistent performance across all evaluated FL methods (FedAvg, FedProx, FedAdam, and FedYogi). There are, however, rare cases where there is some performance difference, e.g., HellaSwag on Qwen2.5-1.5B, where the difference between FedAvg and FedYogi is around $8\%$. However, we can still see accuracy is always above $72\%$, meaning no matter the method chosen, high accuracy is still maintained. These results indicate that SPEAR is robust to differing FL protocols across differing model types, sizes, and datasets. 

\subsection{Centralized Setting}\label{appendix:centralized-exp}

\begin{table*}[htbp]
\centering
\setlength{\tabcolsep}{7pt} 
\renewcommand{\arraystretch}{1.15}
\caption{Performance Comparison of SPEAR vs. baselines in a centralized setting on Qwen2.5-1.5B.}
\label{appendix:table-centralized}
\begin{tabular}{lccccc}
\toprule
\textbf{Algorithm} & \textbf{ARC-Challenge} & \textbf{HellaSwag} & \textbf{MathMCQA} & \textbf{StrategyQA} & \textbf{Avg.} \\
\midrule

Feedback SFT & 75.00$\pm$1.11 & 85.06$\pm$0.10 & 58.68$\pm$0.32 & 66.81$\pm$0.29 & 71.39 \\
OPSD & 22.69$\pm$0.16 & 24.71$\pm$0.03 & 0.00$\pm$0.00 & 0.00$\pm$0.00 & 11.85 \\
RLTF-SD & 75.85$\pm$2.05 & 75.48$\pm$4.16 & 43.36$\pm$10.08 & 63.90$\pm$1.46 & 64.65 \\
\rowcolor{blue!10}
\textbf{SPEAR (Ours)} & 78.67$\pm$0.68 & 86.68$\pm$0.09 & 60.73$\pm$0.07 & 67.47$\pm$0.51 & \textbf{73.39} \\

\bottomrule
\end{tabular}
\end{table*}

\begin{figure}[htbp]
\centering
\begin{subfigure}[htbp]{0.245\textwidth}
    \centering
    \includegraphics[width=\linewidth]{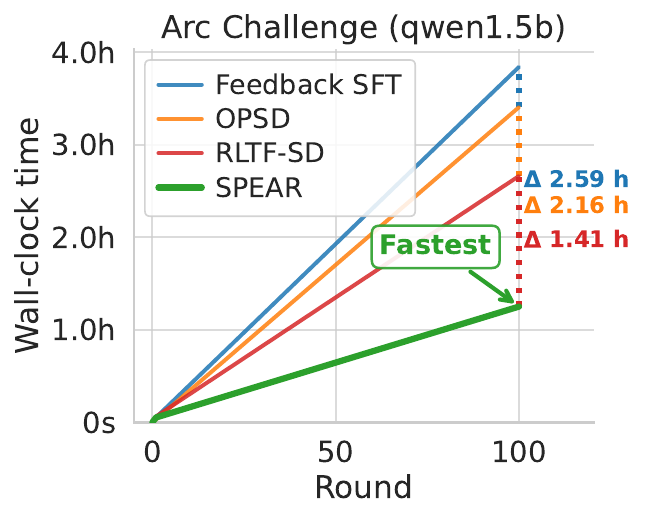}
    \caption{ARC-Challenge}
\end{subfigure}
\begin{subfigure}[htbp]{0.245\textwidth}
    \centering
    \includegraphics[width=\linewidth]{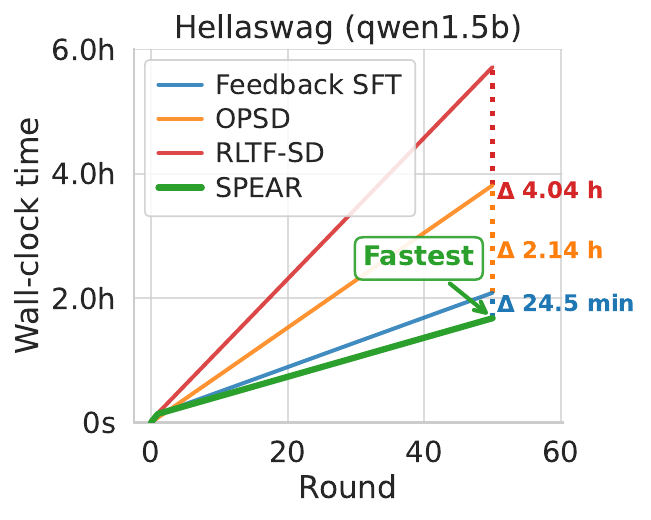}
    \caption{HellaSwag}
\end{subfigure}
\begin{subfigure}[htbp]{0.245\textwidth}
    \centering
    \includegraphics[width=\linewidth]{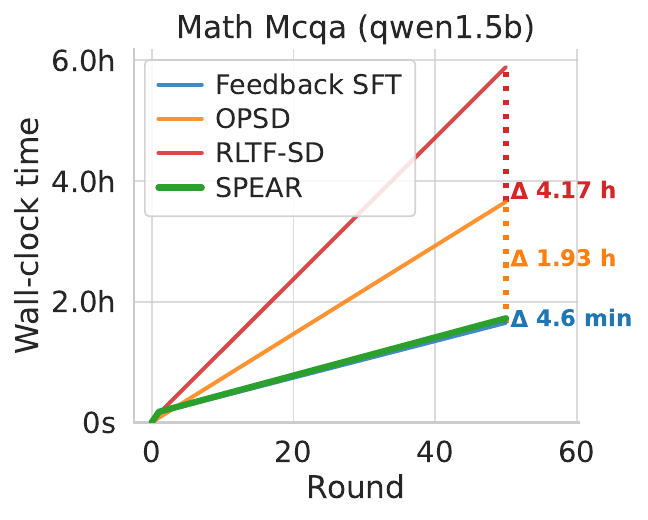}
    \caption{MathMCQA}
\end{subfigure}
\begin{subfigure}[htbp]{0.245\textwidth}
    \centering
    \includegraphics[width=\linewidth]{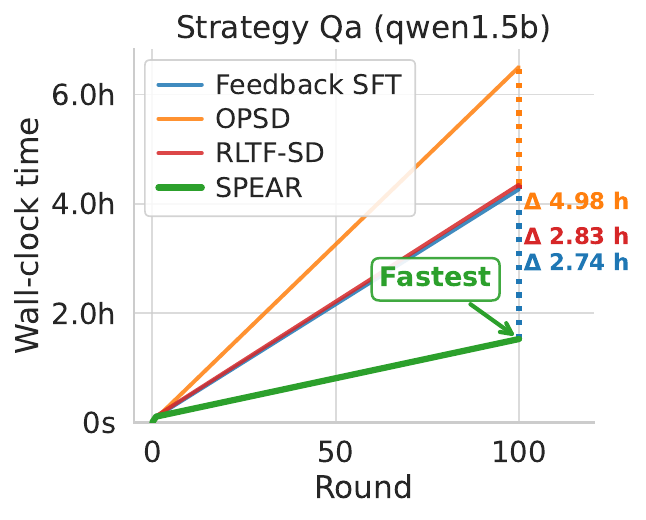}
    \caption{StrategyQA}
\end{subfigure}


\caption{Cumulative training time of each algorithm on Qwen-1.5B model in a centralized setting.}
\label{appendix:centralized_time_curves}
\vspace{-0.5em}
\end{figure}

We consider whether SPEAR can also be extended to the centralized setting outside of FL considerations (i.e., equivalent to having one client), and whether it maintains superior performance in both accuracy and wall-clock time in comparison to the baselines. Results are found in Table \ref{appendix:table-centralized} and Fig. \ref{appendix:centralized_time_curves}.

From the results in Table \ref{appendix:table-centralized}, we note the performance advantages presented by SPEAR in Sec. \ref{sec:experiments} persist even in a centralized setting. Specifically, we can note that SPEAR still maintains superior performance on all datasets in comparison to all baselines on Qwen2.5-1.5B. We see a similar result in Fig. \ref{appendix:centralized_time_curves}, where SPEAR exhibits faster training times in comparison to all baselines in 3 out of 4 datasets. In the case of MathMCQA, the difference with Feedback SFT is only a couple of minutes, and this tradeoff is marked by $2\%$ improvement in accuracy. These results indicate that SPEAR is an effective method of online fine-tuning of LLMs in both an FL and centralized environment.


\section{Detailed Hyperparameter Setup}\label{appendix:hyperparams}
In this section, we outline the hyperparameter settings utilized in the running of experiments within the manuscript (in addition to those presented in Sec. \ref{sec:experiments}) for each algorithm.

\subsection{Hyperparameter Settings (SPEAR)}
\begin{table}[htbp]
\centering
\small
\caption{Training and federated learning hyperparameters for the Qwen2.5-1.5B SPEAR runs.}
\setlength{\tabcolsep}{5pt}
\begin{tabular}{lcccc}
\toprule
\textbf{Hyperparameter} & \textbf{Math MCQA} & \textbf{ARC-Challenge} & \textbf{StrategyQA} & \textbf{HellaSwag} \\
\midrule
Max sequence length & 2048 & 2048 & 1024 & 2048 \\
LoRA rank & 16 & 16 & 16 & 16 \\
LoRA alpha & 32 & 32 & 32 & 32 \\
LoRA dropout & 0.1 & 0.1 & 0.1 & 0.1 \\
\midrule
Number of clients & 50 & 50 & 50 & 50 \\
Dirichlet alpha & 1.0 & 1.0 & 1.0 & 1.0 \\
Number of FL rounds & 50 & 100 & 100 & 50 \\
Clients per round aggregated & 3 & 3 & 5 & 5 \\
Local training steps per round & 5 & 5 & 5 & 10 \\
Ratio of rounds to warmup & 0.1 & 0.1 & 0.1 & 0.1 \\
\midrule
Generation temperature & 0.8 & 0.8 & 0.8 & 0.8 \\
Lambda win weight $\lambda_{w}$ & 1.0 & 1.0 & 1.0 & 1.0 \\
Lambda loss weight $\lambda_{l}$ & 0.1 & 0.1 & 0.1 & 0.1 \\
Unlikelihood threshold $\mu$ & 0.3 & 0.3 & 0.3 & 0.3 \\
$\tau$ & 32 & 0 & 24 & 0 \\
\midrule
Learning rate & $5.05\times 10^{-5}$ & $5.0\times 10^{-5}$ & $1.0\times 10^{-4}$ & $5.0\times 10^{-5}$ \\
Batch size & 4 & 16 & 16 & 16 \\
Grad accumulation steps & 4 & 1 & 1 & 1 \\
Linear warmup ratio & 0.1 & 0.1 & 0.1 & 0.1 \\
Minimum LR for cosine decay & $1.0\times 10^{-5}$ & $1.0\times 10^{-5}$ & $5.0\times 10^{-5}$ & $1.0\times 10^{-5}$ \\
\bottomrule
\end{tabular}
\label{tab:qwen1.5b_hparams}
\end{table}

\begin{table}[htbp]
\centering
\caption{Training and federated learning hyperparameters for the Llama-3.2-3B SPEAR runs.}
\small
\setlength{\tabcolsep}{5pt}
\begin{tabular}{lcccc}
\toprule
\textbf{Hyperparameter} & \textbf{Math MCQA} & \textbf{ARC-Challenge} & \textbf{StrategyQA} & \textbf{HellaSwag} \\
\midrule
Max sequence length & 2048 & 2048 & 1024 & 2048 \\
LoRA rank & 16 & 16 & 16 & 16 \\
LoRA alpha & 32 & 32 & 32 & 32 \\
LoRA dropout & 0.1 & 0.1 & 0.1 & 0.1 \\
\midrule
Number of clients & 50 & 50 & 50 & 50 \\
Dirichlet alpha & 1.0 & 1.0 & 1.0 & 1.0 \\
Number of FL rounds & 50 & 100 & 100 & 50 \\
Clients per round aggregated & 3 & 3 & 5 & 5 \\
Local training steps per round & 5 & 5 & 5 & 10 \\
Ratio of rounds to warmup & 0.1 & 0.1 & 0.1 & 0.1 \\
\midrule
Generation temperature & 0.8 & 0.8 & 0.8 & 0.8 \\
Lambda win weight $\lambda_{w}$ & 1.0 & 1.0 & 1.0 & 1.0 \\
Lambda loss weight $\lambda_{l}$ & 0.1 & 0.1 & 0.1 & 0.1 \\
Unlikelihood threshold $\mu$ & 0.3 & 0.3 & 0.3 & 0.3 \\
$\tau$ & 32 & 0 & 24 & 0 \\
\midrule
Learning rate & $5.05\times 10^{-5}$ & $5.0\times 10^{-5}$ & $1.0\times 10^{-4}$ & $5.0\times 10^{-5}$ \\
Batch size & 4 & 8 & 16 & 8 \\
Grad accumulation steps & 4 & 2 & 1 & 2 \\
Linear warmup ratio & 0.1 & 0.1 & 0.1 & 0.1 \\
Minimum LR for cosine decay & $1.0\times 10^{-5}$ & $1.0\times 10^{-5}$ & $5.0\times 10^{-5}$ & $1.0\times 10^{-5}$ \\
\bottomrule
\end{tabular}
\label{tab:llama3b_hparams}
\end{table}

\subsection{Hyperparameter Settings of baselines (GRPO / OPSD / RLTF-SD)}
Here, the hyperparameter settings of all baselines evaluated (GRPO / OPSD / RLTF-SD) against SPEAR are presented. For StrategyQA, note the max sequence length is increased for the baselines as the entirety of the ``facts" column is inputted as feedback for OPSD; for GRPO, it's included as the regular input, thereby requiring a larger sequence of tokens. For SPEAR, only partial portions of this column are included when attempting revision, allowing a smaller sequence length. 

\begin{table}[htbp]
\centering
\caption{Training, federated learning, and GRPO hyperparameters for the Qwen2.5-1.5B runs.}
\setlength{\tabcolsep}{5pt}
\begin{tabular}{lcccc}
\toprule
\textbf{Hyperparameter} & \textbf{Math MCQA} & \textbf{ARC-Challenge} & \textbf{StrategyQA} & \textbf{HellaSwag} \\
\midrule
Max sequence length & 2048 & 2048 & 1400 & 2048 \\
LoRA rank & 16 & 16 & 16 & 16 \\
LoRA alpha & 32 & 32 & 32 & 32 \\
LoRA dropout & 0.1 & 0.1 & 0.1 & 0.1 \\
\midrule
Number of clients & 50 & 50 & 50 & 50 \\
Dirichlet alpha & 1.0 & 1.0 & 1.0 & 1.0 \\
Number of FL rounds & 50 & 100 & 100 & 50 \\
Clients per round aggregated & 3 & 3 & 5 & 5 \\
Local training steps per round & 5 & 5 & 5 & 10 \\
Ratio of rounds to warmup & 0.1 & 0.1 & 0.1 & 0.1 \\
\midrule
Learning rate & $5.05\times10^{-5}$ & $5.0\times10^{-5}$ & $1.0\times10^{-4}$ & $5.0\times10^{-5}$ \\
Batch size & 2 & 4 & 4 & 4 \\
Grad accumulation steps & 8 & 4 & 4 & 4 \\
Linear warmup ratio & 0.1 & 0.1 & 0.1 & 0.1 \\
Minimum LR for cosine decay & $1.0\times10^{-5}$ & $1.0\times10^{-5}$ & $5.0\times10^{-5}$ & $1.0\times10^{-5}$ \\
\midrule
GRPO $\beta_{\mathrm{KL}}$ & 0.02 & 0.02 & 0.1 & 0.02 \\
GRPO clip range & 0.2 & 0.2 & 0.2 & 0.2 \\
GRPO num. generations & 2 & 4 & 2 & 4 \\
\bottomrule
\end{tabular}
\label{tab:qwen1.5b_grpo_hparams}
\end{table}

\begin{table*}[htbp]
\centering
\caption{Training, federated learning, and GRPO hyperparameters for the Llama-3.2-3B runs.}
\small
\setlength{\tabcolsep}{5pt}
\begin{tabular}{lcccc}
\toprule
\textbf{Hyperparameter} & \textbf{Math MCQA} & \textbf{ARC-Challenge} & \textbf{StrategyQA} & \textbf{HellaSwag} \\
\midrule
Max sequence length & 2048 & 2048 & 1400 & 2048 \\
LoRA rank & 16 & 16 & 16 & 16 \\
LoRA alpha & 32 & 32 & 32 & 32 \\
LoRA dropout & 0.1 & 0.1 & 0.1 & 0.1 \\
\midrule
Number of clients & 50 & 50 & 50 & 50 \\
Dirichlet alpha & 1.0 & 1.0 & 1.0 & 1.0 \\
Number of FL rounds & 50 & 100 & 100 & 50 \\
Clients per round aggregated & 3 & 3 & 5 & 5 \\
Local training steps per round & 5 & 5 & 5 & 10 \\
Ratio of rounds to warmup & 0.1 & 0.1 & 0.1 & 0.1 \\
\midrule
Learning rate & $5.05\times10^{-5}$ & $5.0\times10^{-5}$ & $1.0\times10^{-4}$ & $5.0\times10^{-5}$ \\
Batch size & 2 & 4 & 4 & 4 \\
Grad accumulation steps & 8 & 4 & 4 & 4 \\
Linear warmup ratio & 0.1 & 0.1 & 0.1 & 0.1 \\
Minimum LR for cosine decay & $1.0\times10^{-5}$ & $1.0\times10^{-5}$ & $5.0\times10^{-5}$ & $1.0\times10^{-5}$ \\
\midrule
GRPO $\beta_{\mathrm{KL}}$ & 0.02 & 0.02 & 0.1 & 0.02 \\
GRPO clip range & 0.2 & 0.2 & 0.2 & 0.2 \\
GRPO num. generations & 2 & 4 & 2 & 4 \\
\bottomrule
\end{tabular}
\label{tab:llama3b_grpo_hparams}
\end{table*}

\begin{table}[htbp]
\centering
\caption{Training and federated learning hyperparameters for OPSD on the Qwen2.5-1.5B runs.}
\small
\setlength{\tabcolsep}{5pt}
\begin{tabular}{lcccc}
\toprule
\textbf{Hyperparameter} & \textbf{Math MCQA} & \textbf{ARC-Challenge} & \textbf{StrategyQA} & \textbf{HellaSwag} \\
\midrule
Max sequence length & 2048 & 2048 & 1400 & 2048 \\
LoRA rank & 16 & 16 & 16 & 16 \\
LoRA alpha & 32 & 32 & 32 & 32 \\
LoRA dropout & 0.1 & 0.1 & 0.1 & 0.1 \\
\midrule
Number of clients & 50 & 50 & 50 & 50 \\
Dirichlet alpha & 1.0 & 1.0 & 1.0 & 1.0 \\
Number of FL rounds & 50 & 100 & 100 & 50 \\
Clients per round aggregated & 3 & 3 & 5 & 5 \\
Local training steps per round & 5 & 5 & 5 & 10 \\
\midrule
Learning rate & $5.05\times10^{-5}$ & $5.0\times10^{-5}$ & $1.0\times10^{-4}$ & $5.0\times10^{-5}$ \\
Batch size & 4 & 16 & 4 & 8 \\
Grad accumulation steps & 4 & 1 & 4 & 2 \\
Linear warmup ratio & 0.1 & 0.1 & 0.1 & 0.1 \\
Minimum LR for cosine decay & $1.0\times10^{-5}$ & $1.0\times10^{-5}$ & $5.0\times10^{-5}$ & $1.0\times10^{-5}$ \\
\bottomrule
\end{tabular}
\label{tab:qwen1.5b_opsd_hparams}
\end{table}

\begin{table}[htbp]
\centering
\caption{Training and federated learning hyperparameters for OPSD on the Llama-3.2-3B runs.}
\small
\setlength{\tabcolsep}{5pt}
\begin{tabular}{lcccc}
\toprule
\textbf{Hyperparameter} & \textbf{Math MCQA} & \textbf{ARC-Challenge} & \textbf{StrategyQA} & \textbf{HellaSwag} \\
\midrule
Max sequence length & 2048 & 2048 & 1400 & 2048 \\
LoRA rank & 16 & 16 & 16 & 16 \\
LoRA alpha & 32 & 32 & 32 & 32 \\
LoRA dropout & 0.1 & 0.1 & 0.1 & 0.1 \\
\midrule
Number of clients & 50 & 50 & 50 & 50 \\
Dirichlet alpha & 1.0 & 1.0 & 1.0 & 1.0 \\
Number of FL rounds & 50 & 100 & 100 & 50 \\
Clients per round aggregated & 3 & 3 & 5 & 5 \\
Local training steps per round & 5 & 5 & 5 & 10 \\
\midrule
Learning rate & $5.05\times10^{-5}$ & $5.0\times10^{-5}$ & $1.0\times10^{-4}$ & $5.0\times10^{-5}$ \\
Batch size & 2 & 8 & 4 & 8 \\
Grad accumulation steps & 8 & 2 & 4 & 2 \\
Linear warmup ratio & 0.1 & 0.1 & 0.1 & 0.1 \\
Minimum LR for cosine decay & $1.0\times10^{-5}$ & $1.0\times10^{-5}$ & $5.0\times10^{-5}$ & $1.0\times10^{-5}$ \\
\bottomrule
\end{tabular}
\label{tab:llama3b_opsd_hparams}
\end{table}

\begin{table}[htbp]
\centering
\caption{Training, federated learning, and RLTF-SD hyperparameters for the Qwen2.5-1.5B runs.}
\small
\setlength{\tabcolsep}{5pt}
\begin{tabular}{lcccc}
\toprule
\textbf{Hyperparameter} & \textbf{Math MCQA} & \textbf{ARC-Challenge} & \textbf{StrategyQA} & \textbf{HellaSwag} \\
\midrule
Max sequence length & 2048 & 2048 & 1400 & 2048 \\
LoRA rank & 16 & 16 & 16 & 16 \\
LoRA alpha & 32 & 32 & 32 & 32 \\
LoRA dropout & 0.1 & 0.1 & 0.1 & 0.1 \\
\midrule
Number of clients & 50 & 50 & 50 & 50 \\
Dirichlet alpha & 1.0 & 1.0 & 1.0 & 1.0 \\
Number of FL rounds & 50 & 100 & 100 & 50 \\
Clients per round aggregated & 3 & 3 & 5 & 5 \\
Local training steps per round & 5 & 5 & 5 & 10 \\
Ratio of rounds to warmup & 0.1 & 0.1 & 0.1 & 0.1 \\
\midrule
Learning rate & $5.05\times10^{-5}$ & $5.0\times10^{-5}$ & $1.0\times10^{-4}$ & $5.0\times10^{-5}$ \\
Batch size & 2 & 4 & 4 & 4 \\
Grad accumulation steps & 8 & 4 & 4 & 4 \\
Linear warmup ratio & 0.1 & 0.1 & 0.1 & 0.1 \\
Minimum LR for cosine decay & $1.0\times10^{-5}$ & $1.0\times10^{-5}$ & $5.0\times10^{-5}$ & $1.0\times10^{-5}$ \\
\midrule
RLTF-SD $\gamma$ & 1.0 & 1.0 & 1.0 & 1.0 \\
RLTF-SD SD Coefficient & 0.1 & 0.1 & 0.1 & 0.1 \\
RLTF-SD RL Coefficient & 1.0 & 1.0 & 1.0 & 1.0 \\
RLTF-SD Early Termination & True & True & True & True \\
\bottomrule
\end{tabular}
\label{tab:qwen1.5b_rltfsd_hparams}
\end{table}

\begin{table*}[htbp]
\centering
\caption{Training, federated learning, and RLTF-SD hyperparameters for the Llama-3.2-3B runs.}
\small
\setlength{\tabcolsep}{5pt}
\begin{tabular}{lcccc}
\toprule
\textbf{Hyperparameter} & \textbf{Math MCQA} & \textbf{ARC-Challenge} & \textbf{StrategyQA} & \textbf{HellaSwag} \\
\midrule
Max sequence length & 2048 & 2048 & 1400 & 2048 \\
LoRA rank & 16 & 16 & 16 & 16 \\
LoRA alpha & 32 & 32 & 32 & 32 \\
LoRA dropout & 0.1 & 0.1 & 0.1 & 0.1 \\
\midrule
Number of clients & 50 & 50 & 50 & 50 \\
Dirichlet alpha & 1.0 & 1.0 & 1.0 & 1.0 \\
Number of FL rounds & 50 & 100 & 100 & 50 \\
Clients per round aggregated & 3 & 3 & 5 & 5 \\
Local training steps per round & 5 & 5 & 5 & 10 \\
Ratio of rounds to warmup & 0.1 & 0.1 & 0.1 & 0.1 \\
\midrule
Learning rate & $5.05\times10^{-5}$ & $5.0\times10^{-5}$ & $1.0\times10^{-4}$ & $5.0\times10^{-5}$ \\
Batch size & 2 & 4 & 4 & 4 \\
Grad accumulation steps & 8 & 4 & 4 & 4 \\
Linear warmup ratio & 0.1 & 0.1 & 0.1 & 0.1 \\
Minimum LR for cosine decay & $1.0\times10^{-5}$ & $1.0\times10^{-5}$ & $5.0\times10^{-5}$ & $1.0\times10^{-5}$ \\
\midrule
RLTF-SD $\gamma$ & 1.0 & 1.0 & 1.0 & 1.0 \\
RLTF-SD SD Coefficient & 0.1 & 0.1 & 0.1 & 0.1 \\
RLTF-SD RL Coefficient & 1.0 & 1.0 & 1.0 & 1.0 \\
RLTF-SD Early Termination & True & True & True & True \\
\bottomrule
\end{tabular}
\label{tab:llama3b_rltfsd_hparams}
\end{table*}

\section{Prompt Format}\label{appendix:prompt-format}
Here, the text prompt templates (both initial input and feedback loop) for each dataset are provided below. Parts outlined with $\{\}$ in blue indicates data fields to be inputted into the text prompts.

\definecolor{headerBg}{RGB}{220, 130, 150}
\definecolor{boxBg}{RGB}{255, 245, 248}
\definecolor{fieldBlue}{RGB}{30, 100, 200}
\definecolor{headerText}{RGB}{255,255,255}

\tcbset{
  promptbox/.style={
    enhanced,
    breakable,
    colback=boxBg,
    colframe=headerBg,
    coltitle=headerText,
    fonttitle=\bfseries\small\sffamily,
    title={#1},
    attach boxed title to top left={yshift=-2mm, xshift=4mm},
    boxed title style={
      colback=headerBg,
      colframe=headerBg,
      sharp corners,
      arc=2pt,
    },
    arc=4pt,
    boxrule=1pt,
    left=6pt, right=6pt, top=8pt, bottom=6pt,
    fontupper=\small\ttfamily,
  }
}

\newcommand{\field}[1]{\textcolor{fieldBlue}{\{#1\}}}

\newcommand{\fbsep}{%
  \vspace{4pt}%
  \hrule height 0.4pt \relax%
  \vspace{2pt}%
  \noindent\textit{\small + Feedback (appended on incorrect response):}%
  \vspace{4pt}%
}

\begin{tcolorbox}[promptbox={Prompt Template --- MathMCQA}]

Solve the math problem step by step, then choose the correct answer.\\
Show your reasoning, then end with exactly: ``Answer: X''\\
where X is the letter A, B, C, or D.\\[4pt]
{[PROBLEM]}\\
\field{problem}\\[4pt]
{[OPTIONS]}\\
\field{options}

\fbsep

Incorrect. Your answer `\field{predicted\_letter}' is wrong.\\[4pt]
Hint --- consider these opening steps of the solution:\\
\field{hint\_lines}\\[4pt]
Reason step by step, then end your response with exactly:
``Answer: X'' where X is A, B, C, or D.
DO NOT repeat the letter you used previously.

\end{tcolorbox}

\bigskip

\begin{tcolorbox}[promptbox={Prompt Template --- HellaSwag}]

Choose the best continuation of the context.\\
Respond with ONLY the letter (A, B, C, or D).\\[4pt]
{[CONTEXT]}\\
\field{context}\\[4pt]
{[OPTIONS]}\\
\field{options}\\[4pt]
Answer:

\fbsep

Incorrect. `\field{predicted\_letter}' is wrong.\\[4pt]
Hint: Consider an answer that starts with ``\field{hint\_words}...''\\
Choose the most natural continuation.\\[4pt]
Answer with exactly one letter: A, B, C, or D.
DO NOT choose the answer you used previously.

\end{tcolorbox}

\bigskip

\begin{tcolorbox}[promptbox={Prompt Template --- ARC-Challenge}]

Answer the multiple-choice question by choosing the best option.\\
Respond with ONLY the letter (A, B, C, or D).\\[4pt]
{[QUESTION]}\\
\field{question}\\[4pt]
{[OPTIONS]}\\
\field{options}\\[4pt]
Answer:

\fbsep

Incorrect. `\field{predicted\_letter}' is wrong.\\[4pt]
Hint: Consider an answer that starts with ``\field{hint\_words}...''\\
Consider the underlying scientific principle.\\[4pt]
Answer with exactly one letter: A, B, C, or D.
DO NOT choose the answer you used previously.

\end{tcolorbox}

\bigskip

\begin{tcolorbox}[promptbox={Prompt Template --- StrategyQA}]

Answer the following yes/no question by reasoning step by step.\\
Think through the question carefully, then end your response with\\
exactly: ``Answer: yes'' or ``Answer: no'' on the final line.\\[4pt]
{[QUESTION]}\\
\field{question}

\fbsep

Incorrect. `\field{predicted\_answer}' is not the right answer.\\[4pt]
Hint --- consider these relevant facts:\\
\hspace*{8pt}- \field{facts}\\
\hspace*{8pt}$\vdots$\\[4pt]
Reason through the question step by step, then end your response with
exactly: ``Answer: yes'' or ``Answer: no''.
DO NOT repeat the answer you gave previously.

\end{tcolorbox}

\section{Derivation of Theorem}\label{app:proofs}

In this section, we provide the full proof of Theorem~\ref{thm:margin} and an associated key algebraic lemma utilized for the final derivation of said theorem.

\subsection{Proof of Lemma}
\begin{lemma}[Confidence-Amplified Log Bound]\label{lem:log-bound}
For any $\mu \in (0,1)$ and $p \in (\mu, 1)$:
\begin{equation}\label{eq:log-bound}
-\log(1-p) \;\geq\; \log p \;+\; h(\mu),
\end{equation}
where $h(\mu)$ is as defined in \eqref{eq:h-mu}. The bound is tight: equality holds at $p = 1/2$ when $\mu < 1/2$, in the limit $p \to 1/2^+$ when $\mu = 1/2$, and in the limit $p \to \mu^+$ when $\mu > 1/2$.
\end{lemma}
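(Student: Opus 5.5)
The inequality is equivalent to $\log\frac{1}{p(1-p)} \geq h(\mu)$, i.e. to $p(1-p) \leq \max\!\left(\tfrac14,\ \mu(1-\mu)\right)$ for every $p\in(\mu,1)$. To see this, move $\log p$ to the left of \eqref{eq:log-bound}: $-\log(1-p)-\log p = -\log\bigl(p(1-p)\bigr)$, and since $\log$ is increasing the claim reduces to an upper bound on $g(p):=p(1-p)$ over the interval $(\mu,1)$. So I will bound $\sup_{p\in(\mu,1)} g(p)$ and check that this supremum equals $e^{-h(\mu)}$, which also yields tightness.

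The function $g$ is a concave parabola with maximum $\tfrac14$ at $p=\tfrac12$, increasing on $(0,\tfrac12]$ and decreasing on $[\tfrac12,1)$. I split into two cases.

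\emph{Case $\mu<\tfrac12$.} The point $\tfrac12$ lies in $(\mu,1)$, so $\sup g = \max g = \tfrac14$, attained at $p=\tfrac12$. Also $\mu(1-\mu)<\tfrac14$, so $\max(\tfrac14,\mu(1-\mu))=\tfrac14$ and $h(\mu)=\log 4$, matching \eqref{eq:h-mu}. Equality in \eqref{eq:log-bound} holds exactly at $p=\tfrac12$.

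\emph{Case $\mu\geq\tfrac12$.} On $(\mu,1)\subseteq(\tfrac12,1)$, $g$ is strictly decreasing, so $g(p)<g(\mu)=\mu(1-\mu)\le\tfrac14$. The supremum $\mu(1-\mu)$ is approached only as $p\to\mu^+$ and is not attained. This gives $h(\mu)=\log\frac{1}{\mu(1-\mu)}$, which for $\mu=\tfrac12$ reduces to $\log 4$. This matches both branches of \eqref{eq:h-mu} (at $\mu=\tfrac12$ the two branches agree), and the tightness statements for $\mu=\tfrac12$ and $\mu>\tfrac12$ follow since the bound is approached only in the limit $p\to\mu^+$.

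No step here is a real obstacle. The only care needed is in the tightness claims: in the first case the supremum is attained, while for $\mu\ge\tfrac12$ it is not, because the interval $(\mu,1)$ is open. I will also note that $p\in(0,1)$ keeps both logarithms finite.
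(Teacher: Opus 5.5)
Your proof is correct and follows the paper's argument: both rewrite the inequality as $\log\frac{1}{p(1-p)}\ge h(\mu)$ and use that $p(1-p)$ peaks at $p=\tfrac12$ and is monotone on each side, and your split $\mu<\tfrac12$ versus $\mu\ge\tfrac12$ is slightly more careful than the paper's, whose Case~1 claims $\tfrac12\in(\mu,1)$ even when $\mu=\tfrac12$. One fix to your opening reduction: since $\mu(1-\mu)\le\tfrac14$ for every $\mu$, $\max(\tfrac14,\mu(1-\mu))$ is identically $\tfrac14$, so you should state the target as the piecewise value $e^{-h(\mu)}=\sup_{q\in(\mu,1)}q(1-q)$ rather than the $\max$ form; that piecewise value is what your case $\mu\ge\tfrac12$ actually proves, and it is the only reading under which the tightness claims hold.
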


\begin{proof}
Define $f(p) := -\log(1-p) - \log p = \log\frac{1}{p(1-p)}$. Inequality \eqref{eq:log-bound} is equivalent to $f(p) \geq h(\mu)$ for all $p > \mu$.

Since $p(1-p) = \tfrac{1}{4} - (p - \tfrac{1}{2})^2 \leq \tfrac{1}{4}$ for all $p \in (0,1)$, with equality if and only if $p = \tfrac{1}{2}$, we have $f(p) \geq \log 4$ universally, with $f(p)$ uniquely minimized at $p^* = \tfrac{1}{2}$. Moreover, $f'(p) = \frac{2p-1}{p(1-p)}$, so $f$ is strictly decreasing on $(0, \tfrac{1}{2})$ and strictly increasing on $(\tfrac{1}{2}, 1)$.

\textbf{Case 1} ($\mu \leq \tfrac{1}{2}$): The domain $\{p > \mu\}$ contains $p^* = \tfrac{1}{2}$, so $\inf_{p > \mu} f(p) = f(\tfrac{1}{2}) = \log 4 = h(\mu)$.

\textbf{Case 2} ($\mu > \tfrac{1}{2}$): $f$ is strictly increasing on $(\mu, 1)$, so $\inf_{p > \mu} f(p) = \lim_{p \to \mu^+} f(p) = f(\mu) = \log \frac{1}{\mu(1-\mu)} = h(\mu)$.

In both cases $f(p) \geq h(\mu)$ for all $p > \mu$, establishing \eqref{eq:log-bound}.
\end{proof}

\subsection{Proof of Theorem}
\begin{proof}[Proof of Theorem~\ref{thm:margin}]
Since $\lambda_w \ell_\mathrm{win}(\boldsymbol{\theta}) \geq 0$ and $\lambda_l \ell_\mathrm{lose}(\boldsymbol{\theta}) \geq 0$, the bound $\ell_\mathrm{SPEAR}(\boldsymbol{\theta}) \leq \epsilon$ implies individually:
\begin{align}
\ell_\mathrm{win}(\boldsymbol{\theta}) &\leq \frac{\epsilon}{\lambda_w}, \label{pf:win-ub}\\
\ell_\mathrm{lose}(\boldsymbol{\theta}) &\leq \frac{\epsilon}{\lambda_l}. \label{pf:lose-ub}
\end{align}

\textbf{Step 1: Win lower bound.}
Expanding $\ell_\mathrm{win}$ from \eqref{eq:sft} for the single sample and applying \eqref{pf:win-ub}:
\[
-\sum_{i=0}^{|\mathbf{y}^+|-1} \log p_{\boldsymbol{\theta}}(y^+_i \mid \mathbf{c}^{(0)}, \mathbf{y}^+_{<i}) \;\leq\; \frac{\epsilon}{\lambda_w}.
\]
Dividing by $|\mathbf{y}^+|$:
\begin{equation}\label{pf:win-lb}
\frac{1}{|\mathbf{y}^+|}\log p_{\boldsymbol{\theta}}(\mathbf{y}^+ \mid \mathbf{c}^{(0)}) \;\geq\; -\frac{\epsilon}{\lambda_w\, |\mathbf{y}^+|}.
\end{equation}

\textbf{Step 2: Active-lose upper bound.}
Expanding $\ell_\mathrm{lose}$ from \eqref{eq:unlikelihood-loss} under Assumption~\ref{assump:confidence} (so all tokens in $\mathcal{T}_\tau(\mathbf{y}^-)$ are active) and applying \eqref{pf:lose-ub}:
\[
\alpha \sum_{i \in \mathcal{T}_\tau(\mathbf{y}^-)}\Bigl(-\log\bigl(1 - p_{\boldsymbol{\theta}}(y^-_i \mid \mathbf{c}^{(0)}, \mathbf{y}^-_{<i})\bigr)\Bigr) \;\leq\; \frac{\epsilon}{\lambda_l}.
\]
Dividing by $\alpha > 0$:
\begin{equation}\label{pf:ul-sum}
\sum_{i \in \mathcal{T}_\tau(\mathbf{y}^-)}\Bigl(-\log\bigl(1 - p_{\boldsymbol{\theta}}(y^-_i \mid \mathbf{c}^{(0)}, \mathbf{y}^-_{<i})\bigr)\Bigr) \;\leq\; \frac{\epsilon}{\lambda_l\alpha}.
\end{equation}

\textbf{Step 3: Applying Lemma~\ref{lem:log-bound}.}
By Assumption~\ref{assump:confidence} and Lemma~\ref{lem:log-bound}, for each $i \in \mathcal{T}_\tau(\mathbf{y}^-)$:
\[
\log p_{\boldsymbol{\theta}}(y^-_i \mid c^{(0)}, \mathbf{y}^-_{<i}) \;\leq\; -\log\bigl(1 - p_{\boldsymbol{\theta}}(y^-_i \mid \mathbf{c}^{(0)}, \mathbf{y}^-_{<i})\bigr) - h(\mu).
\]
Summing over $\mathcal{T}_\tau(\mathbf{y}^-)$ and applying \eqref{pf:ul-sum}:
\begin{equation}\label{pf:lose-logp}
\sum_{i \in \mathcal{T}_\tau(\mathbf{y}^-)} \log p_{\boldsymbol{\theta}}(y^-_i \mid \mathbf{c}^{(0)}, \mathbf{y}^-_{<i}) \;\leq\; \frac{\epsilon}{\lambda_l\alpha} - |\mathcal{T}_\tau(\mathbf{y}^-)|\, h(\mu).
\end{equation}
Dividing by $|\mathcal{T}_\tau(\mathbf{y}^-)|$:
\begin{equation}\label{pf:lose-avg-ub}
\frac{1}{|\mathcal{T}_\tau(\mathbf{y}^-)|}\sum_{i \in \mathcal{T}_\tau(\mathbf{y}^-)} \log p_{\boldsymbol{\theta}}(y^-_i \mid \mathbf{c}^{(0)}, \mathbf{y}^-_{<i}) \;\leq\; \frac{\epsilon}{\lambda_l\alpha\,|\mathcal{T}_\tau(\mathbf{y}^-)|} - h(\mu).
\end{equation}

\textbf{Step 4: Assembling the margin bound.}
Substituting \eqref{pf:win-lb} and \eqref{pf:lose-avg-ub} into the definition of $M_\tau$ \eqref{eq:margin}:
\begin{align*}
M_\tau(\boldsymbol{\theta};\, \mathbf{y}^+, \mathbf{y}^-)
&= \frac{1}{|\mathbf{y}^+|}\log p_{\boldsymbol{\theta}}(\mathbf{y}^+ \mid \mathbf{c}^{(0)}) - \frac{1}{|\mathcal{T}_\tau(\mathbf{y}^-)|}\sum_{i \in \mathcal{T}_\tau(\mathbf{y}^-)} \log p_{\boldsymbol{\theta}}(y^-_i \mid \mathbf{c}^{(0)}, \mathbf{y}^-_{<i}) \\[4pt]
&\geq -\frac{\epsilon}{\lambda_w\,|\mathbf{y}^+|} - \left(\frac{\epsilon}{\lambda_l\alpha\,|\mathcal{T}_\tau(\mathbf{y}^-)|} - h(\mu)\right) \\[4pt]
&= h(\mu) - \epsilon\!\left(\frac{1}{\lambda_w\,|\mathbf{y}^+|} + \frac{1}{\lambda_l\,\alpha\,|\mathcal{T}_\tau(\mathbf{y}^-)|}\right),
\end{align*}
which establishes \eqref{eq:margin-bound} and completes the proof. The characterization of $h(\mu)$ follows from Lemma~\ref{lem:log-bound}.

\end{proof}



\end{document}